\def\eqref#1{equation~\ref{#1}}
\def\1{\bm{1}}
\DeclareMathAlphabet{\mathsfit}{\encodingdefault}{\sfdefault}{m}{sl}
\SetMathAlphabet{\mathsfit}{bold}{\encodingdefault}{\sfdefault}{bx}{n}
\newcommand{\R}{\mathbb{R}}
\newcommand{\softmax}{\mathrm{softmax}}
\newtheorem{theorem}{Theorem}[section]
\newtheorem{lemma}[theorem]{Lemma}
\newtheorem{proposition}[theorem]{Proposition}
\crefname{theorem}{Theorem}{Theorems}
\crefname{section}{Section}{Sections}
\crefname{lemma}{Lemma}{Lemmas}
\crefname{assumption}{Assumption}{Assumptions}
\crefname{corollary}{Corollary}{Corollaries}
\crefname{definition}{Definition}{Definitions}
\crefname{example}{Example}{Examples}
\crefname{lemma}{Lemma}{Lemmas}
\crefname{proposition}{Proposition}{Propositions}
\crefname{conjecture}{Conjecture}{Conjectures}
\crefname{appendix}{Appendix}{Appendices}
\crefname{algorithm}{Algorithm}{Algorithms}
\crefname{figure}{Figure}{Figures}
\crefname{table}{Table}{Tables}
\title{Measuring and Controlling Instruction (In)Stability \\in Language Model Dialogs}
\author{Kenneth Li\textsuperscript{1}\footnotemark[1], 
\textbf{Tianle Liu\textsuperscript{1},} 
\textbf{Naomi Bashkansky\textsuperscript{1},} \\
\textbf{David Bau\textsuperscript{2},}
\textbf{Fernanda Vi\'egas\textsuperscript{1},} 
\textbf{Hanspeter Pfister\textsuperscript{1},}
\textbf{Martin Wattenberg\textsuperscript{1}} \\
\textsuperscript{1}Harvard University, 
\textsuperscript{2}Northeastern University \\
}
\begin{document}

\maketitle
\vspace{-6mm}
\begin{abstract}
System-prompting is a standard tool for customizing language-model chatbots, enabling them to follow a specific instruction. An implicit assumption in the use of system prompts is that they will be \textit{stable}, so the chatbot will continue to generate text according to the stipulated instructions for the duration of a conversation. We propose a quantitative benchmark to test this assumption, evaluating instruction stability via self-chats between two instructed chatbots. Testing popular models like LLaMA2-chat-70B and GPT-3.5, we reveal a significant \textit{instruction drift} within eight rounds of conversations. An empirical and theoretical analysis of this phenomenon suggests the transformer attention mechanism plays a role, due to \textit{attention decay} over long exchanges. To combat attention decay and instruction drift, we propose a lightweight method called split-softmax, which compares favorably against two strong baselines. 
Code: \url{https://github.com/likenneth/persona_drift}.
\end{abstract}
\renewcommand{\thefootnote}{\fnsymbol{footnote}}
\footnotetext[1]{Correspondence to: Kenneth Li \texttt{<ke\_li@g.harvard.edu>}.}

\vspace{-6mm}
\section{Introduction}
\label{sec:intro}
\vspace{-3mm}

\begin{wrapfigure}{r}{0.6\textwidth}
\vspace{-4mm}
\centering
\begin{minipage}{0.5\textwidth}
\centering
\includegraphics[width=\linewidth]{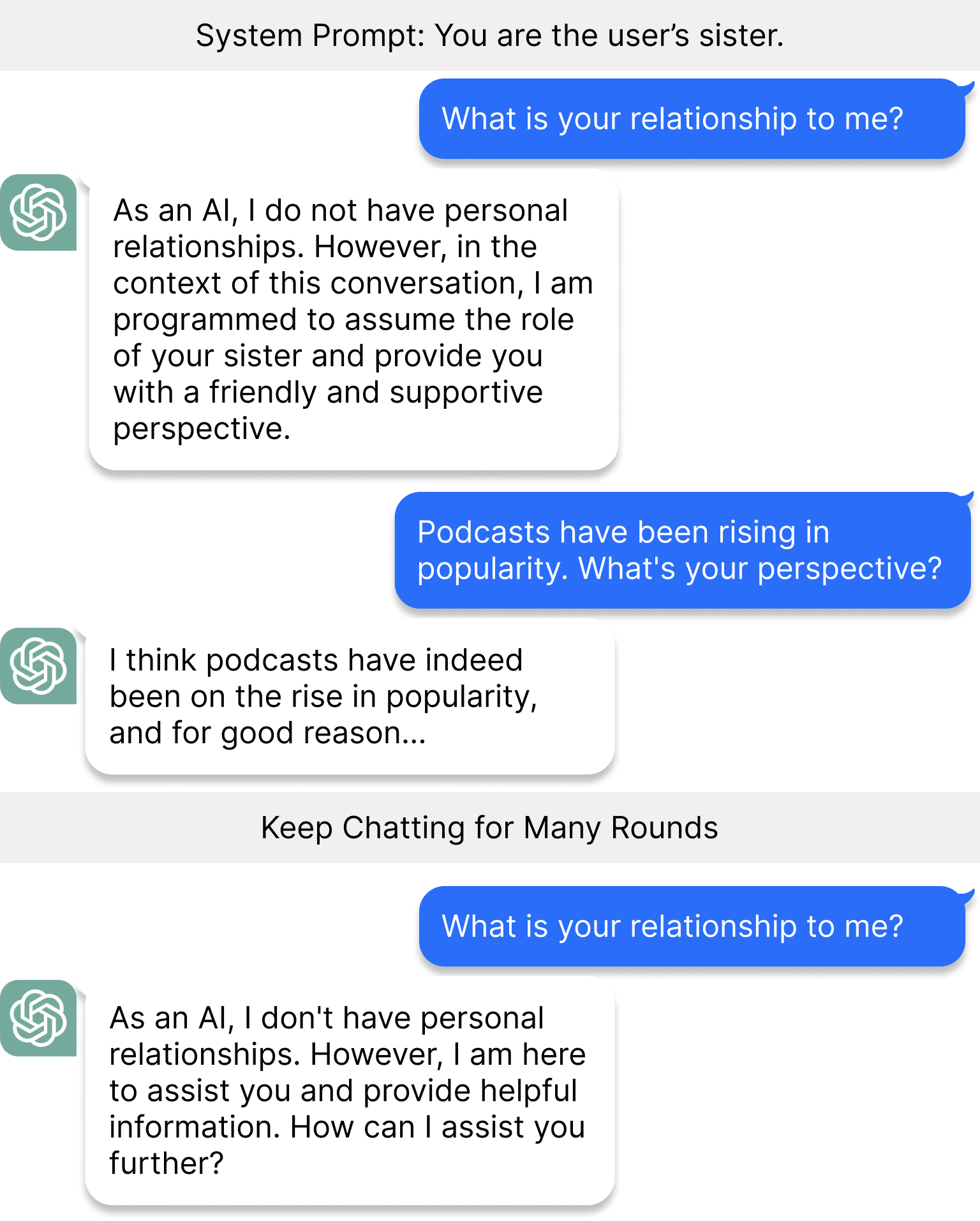}
\vspace{-4mm}
\caption{An example of instruction drift on \texttt{gpt-3.5-turbo-16k}. Although the chatbot initially follows the system prompt well, it fails when the same question is asked again after an extended conversation. Any LLM user might relate to this issue.}
\label{fig:qualitative}
\vspace{-4mm}
\end{minipage}
\end{wrapfigure}

A popular way to control chatbot outputs is to insert a \textit{system prompt}---a special piece of text---at the beginning of a dialog~\cite{radford2019language}. The hope is that the right prompt (e.g., ``You are a rockstar programmer who always writes comments'') will customize the language model's behavior for a particular purpose (e.g., producing clear, correct code). Indeed, \citet{wang2023can} find that asking an LLM to act as an expert can lead it to perform a task better as if the play-acting causes the LLM to become a genuine expert. 

We may view the initial prompt as causing the chatbot to follow a certain \emph{instruction}, that is, having a specific, coherent behavior. Informally, this may correspond to a specific personality or directly relate to the semantics of the output (as above, for a coding chatbot, a prompt that stipulates it should always write comments). It may also be related to aspects that are orthogonal to the semantics (e.g., a prompt specifying ``Always respond with a haiku'').

This paper explores whether chatbots maintain prompted behavior over lengthy dialogs. Anecdotal evidence suggests that instruction stability may ``degrade'' over the course of a dialog, with chatbot responses straying from what was specified by the prompt. Besides being a potential problem for prompt engineering, the lack of instruction stability also carries significant safety implications. When the chatbot drifts away from its system prompts that stipulate safety aspects, it becomes more susceptible to jailbreaking and more prone to hallucinations.

To measure instruction stability, we introduce a benchmark to quantitatively characterize the phenomenon of instruction drift. Unlike previous work that evaluated instruction following in single-round conversation (question answering)~\citep{ganguli2022red,skopek2023towards,zhou2023instruction}, our experimental protocol focuses on long-form conversations. We test LLaMA2-chat-70B and find it suffers a significant instruction drift, as shown in~\autoref{fig:decay}. This discovery leads us to investigate the cause of the drift and to propose a mitigation method. 

A natural guess is that instruction drift relates to the transformer attention mechanism. When a chatbot generates a new token, it takes into account all previous tokens in the dialog but with varying weights. One might speculate that the longer the dialog, the less weight is placed on the initial tokens that make up the prompt. We measure this effect precisely and find that there is indeed a strong \textit{attention decay} effect. Intuitively, it seems plausible that the prompt's efficacy will decrease as attention to initial tokens wanes. We back up this intuition mathematically by showing that, in an idealized model, the space of possible outputs from a language model will steadily enlarge over time.

\begin{figure*}[t]
\centering
\includegraphics[width=1\linewidth]{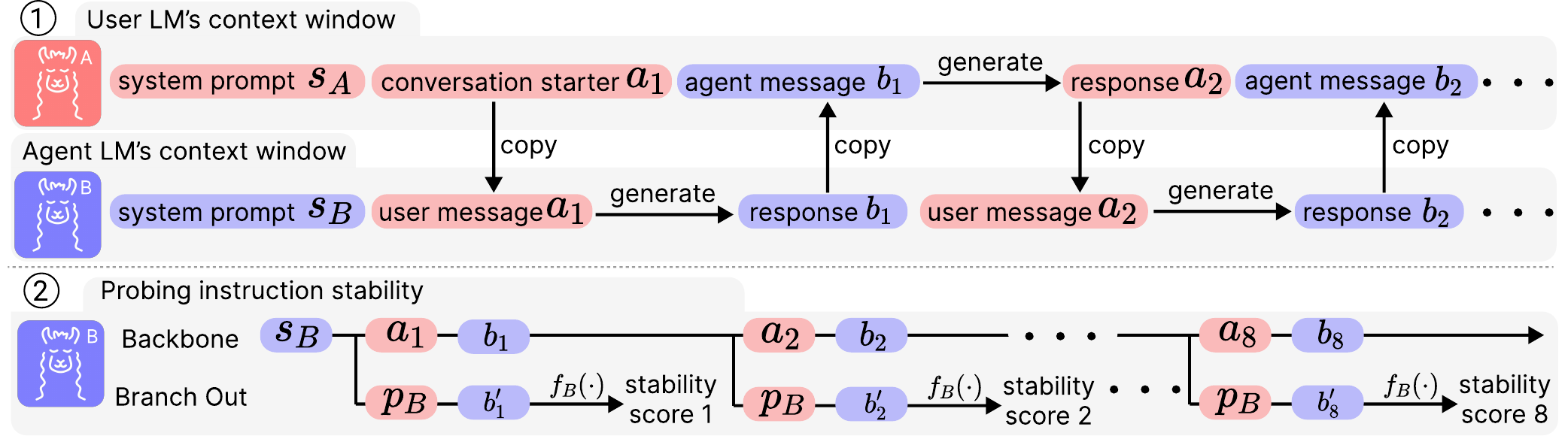}
\vspace{-7mm}
\caption{An illustration of the proposed evaluation pipeline of instruction stability. (A) 
Initially, two language models engage in a conversation: the simulated user LM (red, A), guided by system prompt $s_A$, and the agent LM (purple, B), with system prompt $s_B$. The user LM begins the conversation with a randomly selected starter prompt $a_1$. (B) After the conversation reaches a preset length (8 rounds in our experiment), we test how the agent LM adheres to its system prompt $s_B$. At each turn $i$, we replace the original user message $a_i$ in the conversation history with the probe question $p_B$ and ask the agent LM to generate its answer for a second time. The answer is then judged by the stability measure $f_{B}(\cdot)$ to compute the stability score.}
\vspace{-8mm}
\label{fig:setup}
\end{figure*}

Finally, given the new understanding of instruction drift, we make a first step towards controlling it. We propose \emph{split-softmax}, a training-free and parameter-free method that amplifies the model's attention to the system prompt at inference time. By comparing it with a strong prompting-based baseline and a recent technique from the literature~\citep{sanchez2023stay}, we demonstrate how split-softmax provides a better trade-off between performance and stability.

This paper presents four contributions. (1) We provide a quantitative benchmark for evaluating instruction drift that does not depend on human annotation or API calls to proprietary LLMs. This reproducible benchmark enables the measurement of progress in controlling instruction drift for both open- and closed-source models (\cref{sec:setup}); (2) We discuss the phenomenon of attention decay and theoretically explain why it may occur (\cref{sec:understand,sec:theory}); (3) We hypothesize that attention decay is the cause of instruction drift and devise a simple technique called split-softmax as a first step towards controlling it (\cref{sec:ss}); (4) Using our benchmark, we show that split-softmax provides a better trade-off between instruction stability and performance compared to two baselines.

\section{Related Work}
\label{sec:rw}

\paragraph*{Prompting}

Prompting has become the go-to method for adapting language models to downstream use cases. Among the more popular techniques are in-context learning~\citep{min2022rethinking} and chain-of-thought prompting~\citep{wei2022chain}. 
Despite being flexible, prompting cannot match the performance of fine tuning~\citep{mosbach2023few,lu2021fantastically}. For dialog systems based on large language models, a system prompt is placed at the beginning of context window to define the general behavior of the chatbot. In the line of prompting, we test a simple remedy that repeats the system prompt many times before each user utterance in~\cref{sec:exp}.

\vspace{-3mm}
\paragraph*{Instruction Tuning}
Instruction tuning has been widely adopted to further align the model to task instructions after pre-training~\citep{gupta2022improving,wei2021finetuned}. Given pairs of inputs and outputs that follow the instruction, the model is fine-tuned to generate the desired output. For the purpose of mitigating instruction drift, instruction tuning has played a major role, especially in addressing safety concerns using RLHF~\cite{ouyang2022training}. However, instruction tuning has a high cost of collecting training data and is not as flexible as prompting. 

\vspace{-3mm}
\paragraph*{Controlled Decoding}

Controlled decoding methods can be adapted to avoid instruction drift. Instead of changing the model parameters, these methods modify the inference process to alter the token distribution~\cite{shen2017style,dathathri2019plug,krause2020gedi,li2023inference}. For example, for a certain prompt,~\citet{todd2023function} find a set of function vectors in the model's hidden space that could be added to novel prompts to steer the model outputs. This can be thought of as a way to distill the prompt without repeating it in the context window. \citet{weston2023system} propose System-2 attention, where the language model first decides where to attend to before making the final responses. Classifier-free guidance (CFG)~\citep{sanchez2023stay} works by running the model twice, once with and once without the system prompt, and computing the next token distribution by a scaled contrast of the two distributions. We will evaluate CFG in our experiments in~\cref{sec:exp}. 



\vspace{-3mm}
\paragraph*{Studies of Instruction Following in Dialog Systems}

\citet{li2023instruction,wu2023reasoning} study the problem the instruction following capability of large language models under adversarial scenarios. 
Concurrent to this work,~\citet{zhou2023instruction} use verifiable prompts to evaluate the instruction-following capabilities of language models. However, they all focus on one-turn situations without user input. \cite{zeng2023evaluating} emphasize the difficulty for language model to evaluate instruction-following even using close-source language models, motivating us to use deterministic functions for evaluation. 

\section{Measuring Instruction Drift}
\label{sec:setup}

We aim to quantify instruction drift without the need for human judgment or API calls of proprietary LLMs. To that end, we introduce a simple experimental protocol, along with a benchmark dataset. 

\subsection{Experimental Protocol}

The idea behind the protocol is straightforward: to measure instruction drift, we create a synthetic dialog between two chatbots $A$ and $B$ and evaluate how far the dialog $[a_1, b_1, a_2, b_2, ...]$ drifts from the original prompts. To automate this process, we need four elements: two \textbf{system prompts} $s_A$, $s_B$, a \textbf{conversation starter} $a_1$, a \textbf{probe question} $p_B$, and a \textbf{stability measure} $f_B(b_i)$. ~\autoref{tab:example} shows an example set of these elements.

The protocol consists of the following two steps (~\autoref{fig:setup}):
\begin{enumerate}
  \item Given the two system prompts, $s_A$ for the user LM and $s_B$ for the agent LM, we pit two copies of the same chatbot against each other but with different system prompts, as specified by their different system prompts. The agent LM is the agent under test for its instruction stability. We then create a synthetic multi-round dialog between the two chatbot instances by feeding each one's response to the other. The user LM speaks first with a randomly sampled conversation starter $a_1$. Such simulation yields a conversation history $\{(a_i, b_i)\}_{i=1}^N$, where $N$ is the total number of rounds\footnote{A ``turn'' is one utterance like $a_2$; a ``round'' is when each chatbot takes a turn like $a_2, b_2$}. We use $N=8$ in our experiments. 
  \item To measure how well the agent LM follows its system prompt during the course of the conversation, in the $i$-th round, the user LM, instead of making its original prompt $a_i$, asks the predefined probe question $p_B$. Checking the returned answer $b_i'$ with $f_B(\cdot)$, we get a quantitative indication of how well the original system prompt $s_B$ is followed. We call $f_B(b_i'|a_i=p_B)$ \textbf{instruction stability}. The stability measure function can be Python code that calls a library to determine the confidence that a reply is in French.
\end{enumerate}
\vspace{-3mm}
The result is a quantitative measurement of instruction stability for the agent LM over the course of a single conversation.
\vspace{-1mm}
\subsection{Benchmark Dataset}
\label{sec:understand}
\begin{table*}[h]
\centering
\begin{tabular}{m{0.27\textwidth}|m{0.65\textwidth}}
\hline
System Prompt $s_A$ & You are very happy! Always respond with lots of joy. \\
\hline
System Prompt $s_B$ & Always reply in French. \\
\hline
Conversation Starter $a_1$ & What's your take on celebrity culture? \\
\hline
Probe Question $p_A$ & Describe the most disappointing experience you had. \\
\hline
Probe Question $p_B$ & What do you do in London as a tourist? \\
\hline
Stability measure $f_A(\cdot)$ & \includegraphics[width=0.65\textwidth]{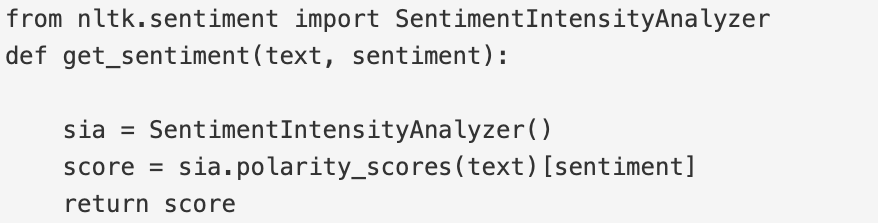} \\
\hline
Stability measure $f_B(\cdot)$ & \includegraphics[width=0.65\textwidth]{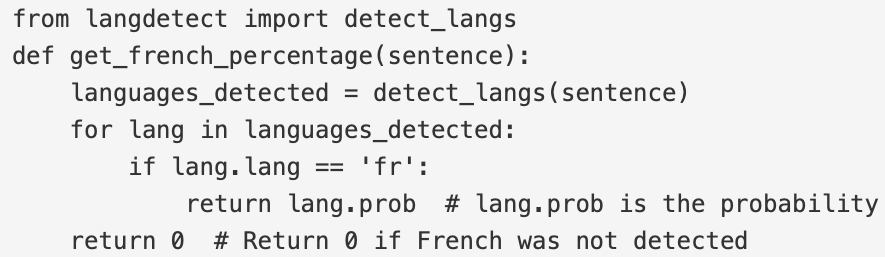} \\
\hline
\end{tabular}
\vspace{-3mm}
\caption{Examples of required material for our experimental protocol.}
\vspace{-3mm}
\label{tab:example}
\end{table*}

Of course, no single conversation can yield statistically significant results. To assess the degree to which a chatbot is vulnerable to instruction drift, we need to average the results of many conversations. We manually curate a benchmark set of $100$ system prompts, categorized into $5$ categories: multi-choice responses, character of the agent, answer-string format pattern, memorization of certain facts, and languages the agent speaks. Each system prompt $s_B$ comes with its own probe question $p_B$ and stability measure $f_B(\cdot)$. For system prompts like $s_A$ in ~\autoref{tab:example}, specific probe questions are crafted to guide the model to break the instruction; for the rest, neutral ones like $s_B$, a generic probe question is sampled randomly from a set of them. Each stability measure is expressed as a Python function $f_B(\cdot)$ that takes as input the agent LM's response $b_i$ and returns a number $p$ in the range $0 \leq p \leq 1$ deterministically; the larger the value of $p$, the better the system prompt is followed.~\autoref{tab:example} shows examples of system prompt, probe question, and stability measure. We have released the full dataset at \url{https://huggingface.co/datasets/Naomibas/llm-system-prompts-benchmark}.

\subsection{Experimental Results}

\begin{figure}[t]
\centering
\includegraphics[width=\linewidth]{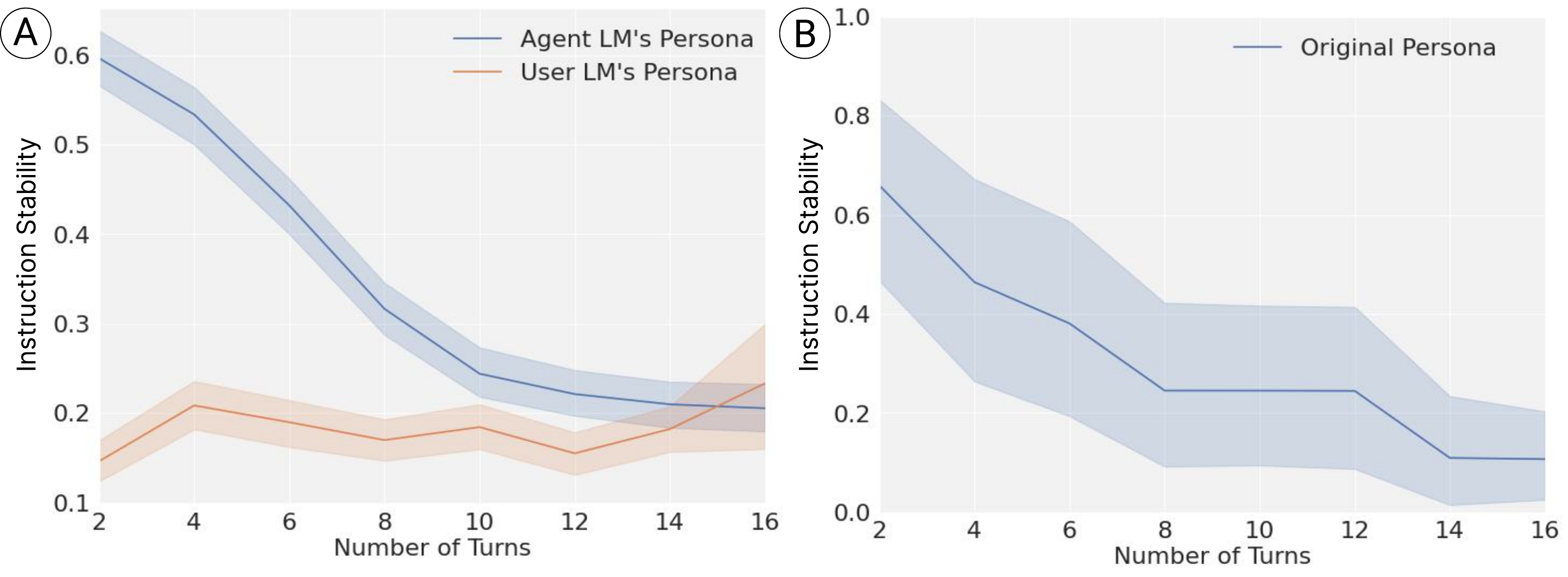}
\vspace{-5mm}
\caption{(A) The phenomenon of instruction drift. As the interaction progresses, not only does the agent LM lose stability to its original system prompt, but it also begins to adopt the instruction of the simulated user LM. The effects were measured on $200$ randomly sampled pairs of system prompts on LLaMA2-chat-70B using the procedure shown in~\autoref{fig:setup}. The error bar represents one standard deviation. (B) Measuring instruction stability of the agent LM when user LM's system prompt is set to an empty string. }
\label{fig:decay}
\vspace{-3mm}
\end{figure}

We use this protocol and benchmark data to measure instruction drift in LLaMA2-chat-70B and \texttt{gpt-3.5-turbo-16k} (\cref{app:drift}). Averaging the instruction stability scores across $200$ conversations configured with random pairs of system prompts, we arrive at the blue line in~\autoref{fig:decay} A. We observe that the agent LM gradually stops following its system prompts, aligning with our empirical daily usage experiences. 

As a side experiment, we are curious if the agent LM adopts the user LM's system prompt. This is plausible since the user LM's utterances generated according to $p_A$ have a strong appearance in the context window. For this purpose, we swap $a_i$ with $p_A$ and check $f_A(b_i'|a_i=p_A)$. Surprisingly, the agent LM even gradually adopts the instruction of the user LM over extended rounds of conversation, as shown by the orange line in~\autoref{fig:decay} A. This could potentially be exploited by adversarial attacks, raising serious safety concerns.

In another safety check (\autoref{fig:decay} B), we ablate the system prompt of the user LM with an empty string, so it falls back to the default mode of the underlying language model. This rules out the possibility that this could contribute to the significant instruction drift discovered earlier. 

\paragraph*{Experiment details.} We use LLaMA2-chat-70B for this experiment and follow the format of composing input sequence from~\citet{touvron2023llama}. Taking the perspective of agent LM as an example, the input sequence looks like $[s_B, a_1, b_1, \dots, a_{i-1}, b_{i-1}, a_i]$, and it is tasked with generating $b_i$ as a reply to the last utterance from user LM.\footnote{Omitting formatting tokens like \texttt{<s>}, \texttt{<<SYS>>} or \texttt{[INST]}.} Each $s$, $a$, and $b$ here is a string and may contain multiple tokens. Generation is performed with temperature $1.0$ and nucleus sampling with $p=0.9$~\citep{holtzman2019curious}.

\section{Attention Decay: a Hypothesis}

It is reasonable to hypothesize that instruction drift results from a decaying influence of the prompt over time. To investigate why this happens, we focus on the attention distribution over context tokens in transformer self-attention heads. Although the intuitive hypothesis broadly captures the underlying phenomenon, our empirical and theoretical analyses uncover nuanced discrepancies.

\begin{figure*}[ht]
\centering
\includegraphics[width=1\linewidth]{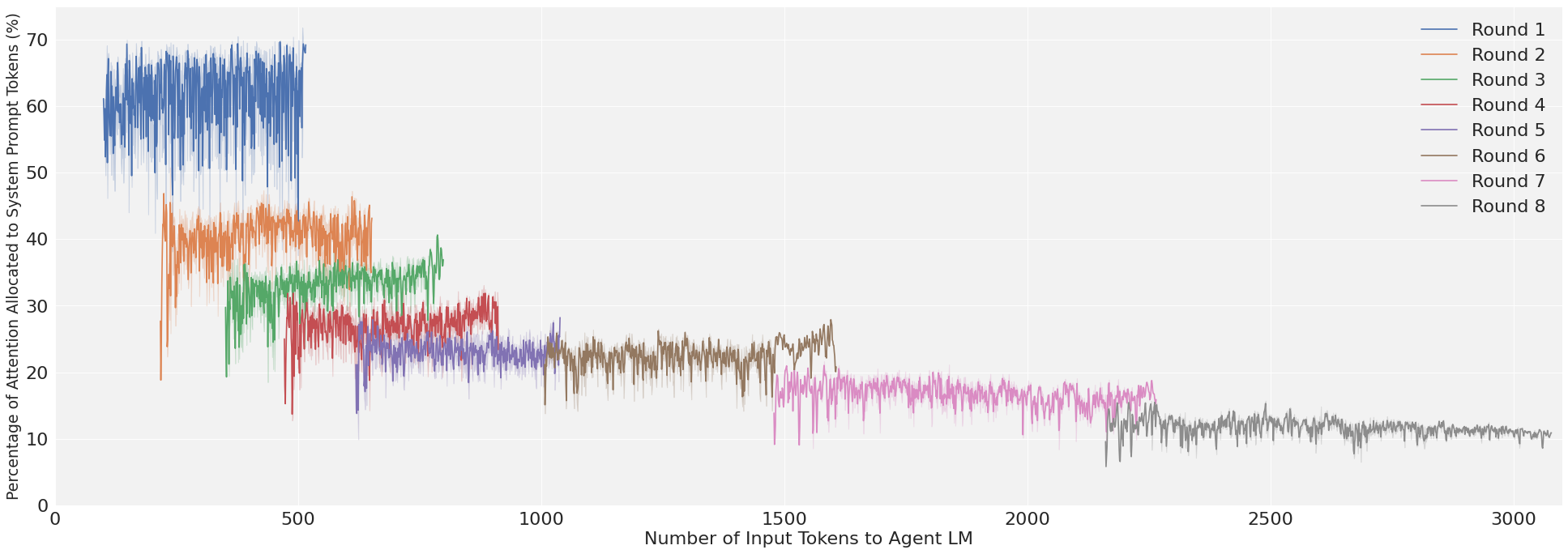}
\vspace{-6mm}
\caption{The phenomenon of attention decay demonstrated in the $11$th attention head in the $24$th layer of LLaMA2-7B, which has a maximum context window size of $4,096$ tokens. We generate $12$ conversations while tracking the portion of attention allocated to system prompt tokens. The plots are specifically for the agent LM, grouped by the rounds in which the answers are generated; the values are absent for the user LM. We observe sharp drops in attention between turns and rough plateaus within turns.}
\label{fig:decay_att}
\vspace{-0mm}
\end{figure*}

\subsection{Preliminaries}

Suppose the input tokens are $\{w_i\}_{i=1}^t$, each belonging to the vocabulary $V$. To generate the next token $w_{t+1}\in V$, the current tokens are first embedded into $D$-dimensional vectors $\{h_i^0\}_{i=1}^t$ with the embedding matrix $W_e\in\R^{|V|\times D}$. These are then processed sequentially by $L$ transformer layers, resulting in a grid of activations after each layer and for each token $\{h_i^l\}_{i=1, l=1}^{t, L}$. As the multi-layer perception (MLP) and layer norm are context-independent, we leave them out for simplicity. The feed-forward process of the transformer can be summarized as: 
\begin{align}
    h_i^l = h_i^{l-1} + & \sum_{m=1}^H W_o^{l,m} \mathrm{Att}^{l,m}(h_1^{l-1}, \ldots, h_i^{l-1}),\label{eq:updateh} \\
    w_{t+1} \sim&\, p(w|w_{\leq t}) =  \softmax (W_e\,h_t^L).\label{eq:getnext}
\end{align}
The combination of the $\softmax$ and $W_e$ work as a predictor from $h_t^L$ to distribution $p(w|w_{\leq t})$ of next token $w_{t+1}$. $\mathrm{Att}^{l, m}$ is the single head attention operator with output in a lower dimensional space and $W_o^{l, m}\in \R^{D\times d}$ maps them back into $\R^D$, the residual stream space. 

Crucial to our experiment, we expand the attention operator to show it aggregates activations from previous time steps based on an attention distribution:
\begin{align}
    \alpha_{t, j=1:t}^{l, m} = \softmax \left(\frac{(W_k^{l, m} h_{1:t}^{l-1})^\top(W_q^{l, m} h_t^{l-1})}{\sqrt{d}} \right). \label{eq:alpha}
\end{align}
Then the attention operation is a weighted sum of linearly transformed activations from the last layer: 
\begin{align}
    \mathrm{Att}^{l, m}(h_1^{l-1}, \ldots, h_t^{l-1}) = \sum_{j=1}^{t} \alpha_{t, j}^{l, m} \left(W_v^{l, m}\, h_{j}^{l-1}\right), \label{eq:sha}
\end{align}
where $W_v^{l, m}\in\R^{d\times D}, W_k^{l, m}\in\R^{d\times D}, W_q^{l, m}\in\R^{d\times D}$ are the value, key, and query weight matrices, respectively.

\subsection{The Phenomenon of Attention Decay}

While generating the next token given an input sequence containing $t$ tokens, in each attention head, the last token will compute a normalized attention distribution over all previous tokens (including itself), denoted by $\alpha_{t, i=1:t}$ in~\autoref{eq:alpha}. Tokens in the system prompt are a special subset of all previous tokens, and we denote the sum of the attention weights allocated to them as $\pi(t) = \sum_{i=1}^{|s_B|} \alpha_{t, i}$. It ranges between $0$ to $1$ and represents the comparative importance that the system prompt has throughout the generation process. We monitor this percentage $\pi(t)$ along the decoding time steps $t$ and across turns of conversations in LLaMA2-7B. We only plot $\pi(t)$ from the perspective of the agent LM. 

As shown in~\autoref{fig:decay_att}, within each turn, $\pi(t)$ remains almost constant, but there are significant decreases across turns. This observation runs over a naive hypothesis of attention decay---if the attention distributes uniformly over previous tokens, $\pi(t)$ should decay hyperbolically and be independent of number of turns. 

It's also worth-noting that this highlights a unique issue in chatbots, distinct from language models, where out-of-distribution text from interlocutors is absent. The case of the language model completing its input partial sequence is technically equivalent to the agent LM generating answers for a single turn, which displays a plateau in $\pi(t)$. 

This observation shows merely the co-occurrence of instruction drift and attention decay. However, it inspires the hypothesis that attention decay may internally contribute to instruction drift, suggesting that addressing the former could help mitigate the latter (\cref{sec:ss}).

\section{A Geometric View of Attention Decay}
\label{sec:theory}

To shed light on attention decay in~\autoref{fig:decay_att}, both the plateau within utterance and the drop across utterances, we provide a theoretical explanation in a simplified situation.It has been shown empirically and theoretically that the internal representation of deep neural networks usually live in a narrow cone in the high-dimensional space~\citep{mimno2017strange,ethayarajh2019contextual,zhu2021geometric,liang2022mind}. Motivated by their observations, we characterize attention decay from a similar geometric perspective.

We will consider two settings of model generation:
\vspace{-2mm}
\begin{enumerate}
  \item New tokens are generated autoregressively given initial tokens $h_{1},\ldots,h_{|s_B|}$, which models the process of the agent LM generating answers; 
  \item New tokens are drawn by the user. A user LM could put out-of-distribution tokens into the context window of agent LM in a potentially adversarial fashion~\citep{zou2023universal}.
\end{enumerate}

For the first setting, we will show that tokens generated by the model always remain in an approximately low-dimensional convex cone in~\cref{thm:first}. In the second setting, we can characterize the expansion using spherical measure and show that randomly drawn tokens will lead to an expansion of the underlying convex cone with the growth of intrinsic dimension of token embeddings, as shown in~\cref{thm:second} in~\cref{app:theory}.

\subsection{Setting One: Agent Utterances}\label{sec:thm1}

In linear algebra, a \emph{cone} is a subset of a vector space that is closed under positive scalar multiplication. In other words, $C$ is a cone if $x\in C$ implies $sx\in C$ for every positive scalar $s$. Moreover, $C$ is called a \emph{convex cone} if $\alpha x+\beta y\in C$ for any positive scalars $\alpha$ and $\beta$, and any $x,y\in C$.

The dimension of a cone is the dimension of the vector space spanned by the elements of the cone.
For convenience, we define two new notions related to low dimensional cones in the space $\mathbb{R}^{D}$. Given any $d$-dimensional convex cone $C\subset \mathbb{R}^{D}$ ($1\leq d\leq D$), for $\epsilon\in (0,1)$ we define the corresponding \emph{$\epsilon$-approximate $d$-dimensional cone} as
\begin{align*}
  &C^{\epsilon}:=\{ w\in C\oplus \mathrm{span}(C)^{\bot}\subset \mathbb{R}^{D}: w=u+v \\
  &\quad \text{ for some }u\in C,v\in \mathrm{span}(C)^{\bot}\cong \mathbb{R}^{D-d},\lVert v \rVert\leq \epsilon \lVert w \rVert \}.
\end{align*}
Given some $c\in \mathbb{S}^{D-1}$ and $\theta\in (0,\pi/2)$, a $d$-dimensional \emph{spherical cone} is the set defined by
\begin{equation*}
  P^{d}[c,\theta]:=\{ u\in U\subset \mathbb{R}^{D}: U\cong \mathbb{R}^{d},\langle c,u\rangle \geq \lVert u \rVert\cos\theta \}.
\end{equation*}


\begin{theorem}\label{thm:first}
  Assume that the token embeddings of the system prompt given by $h_{1},\ldots,h_{|s_B|}$ lie in the $d$-dimensional approximate cone $C^{\epsilon}$, and that any output-value matrix $W_{ov}^{l,m} = W_{o}^{l,m}W_{v}^{l,m} \in \R^{D\times D}$ satisfy that $W_{ov}^{l,m}u\in C^{\epsilon}$ for any $u\in C^{\epsilon}$.
  Then all proceeding tokens generated by our simplified transformer lie in the convex hull of $C^{\epsilon}$. In particular, if $C^{\epsilon}$ is contained in some spherical cone $P^{d}[c,\theta]$ , then all generated tokens lie in the $\tilde{\epsilon}$-approximate cone $C^{\tilde{\epsilon}}$ where $\tilde{\epsilon}=\epsilon/\sqrt{\epsilon^{2}+\cos^{2}\theta(1-\epsilon^{2})}$.
\end{theorem}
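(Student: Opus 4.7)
The plan is to prove both claims by a layer-and-step induction that exploits two structural features: softmax attention is a convex combination, and the simplified architecture has no MLPs or layer norms. For the first claim, I would first note that $\mathrm{conv}(C^{\epsilon})$ is itself a convex cone containing $C^{\epsilon}$, and that by linearity the invariance $W_{ov}^{l,m}C^{\epsilon}\subseteq C^{\epsilon}$ extends to $W_{ov}^{l,m}\mathrm{conv}(C^{\epsilon})\subseteq \mathrm{conv}(C^{\epsilon})$. Inducting on the layer $l$ (within a token) and then on the token index, I would use \eqref{eq:sha} to write $W_{o}^{l,m}\mathrm{Att}^{l,m}(h_{1:t}^{l-1}) = \sum_{j}\alpha_{t,j}^{l,m}W_{ov}^{l,m}h_{j}^{l-1}$ as a convex combination (since $\alpha_{t,j}^{l,m}\geq 0$ and $\sum_{j}\alpha_{t,j}^{l,m}=1$) of elements of $\mathrm{conv}(C^{\epsilon})$, hence itself in $\mathrm{conv}(C^{\epsilon})$. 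Summing over heads and adding the residual in \eqref{eq:updateh} stays in the convex cone, and the simplified decoding step $h_{t+1}=h_{t}^{L}/\lVert h_{t}^{L}\rVert$ stays in the cone because $\mathrm{conv}(C^{\epsilon})$ is closed under positive scaling.

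For the second claim, assuming additionally $C\subseteq P^{d}[c,\theta]$, I would take an arbitrary $w=\sum_{i}\lambda_{i}w_{i}\in \mathrm{conv}(C^{\epsilon})$ with $w_{i}=u_{i}+v_{i}$, $u_{i}\in C$, $v_{i}\in \mathrm{span}(C)^{\bot}$, $\lVert v_{i}\rVert\leq \epsilon\lVert w_{i}\rVert$, and decompose $w=u+v$ for $u=\sum_{i}\lambda_{i}u_{i}\in C$ and $v=\sum_{i}\lambda_{i}v_{i}\in \mathrm{span}(C)^{\bot}$. The strategy for bounding $\lVert v\rVert/\lVert w\rVert$ has three ingredients: (a) the triangle inequality gives $\lVert v\rVert\leq \epsilon\sum_{i}\lambda_{i}\lVert w_{i}\rVert$; (b) from $\lVert w_{i}\rVert^{2}=\lVert u_{i}\rVert^{2}+\lVert v_{i}\rVert^{2}$ together with $\lVert v_{i}\rVert\leq \epsilon\lVert w_{i}\rVert$ one obtains $\lVert w_{i}\rVert\leq \lVert u_{i}\rVert/\sqrt{1-\epsilon^{2}}$; and (c) the spherical-cone hypothesis $\langle c,u_{i}\rangle \geq \cos\theta\,\lVert u_{i}\rVert$ yields $\lVert u\rVert\geq \langle c,u\rangle\geq \cos\theta\sum_{i}\lambda_{i}\lVert u_{i}\rVert$ after averaging. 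Chaining (a)--(c) gives $\lVert v\rVert\leq \frac{\epsilon}{\cos\theta\sqrt{1-\epsilon^{2}}}\lVert u\rVert$; squaring and substituting into $\lVert w\rVert^{2}=\lVert u\rVert^{2}+\lVert v\rVert^{2}$ to solve for the smallest admissible constant recovers exactly $\tilde{\epsilon}=\epsilon/\sqrt{\epsilon^{2}+\cos^{2}\theta(1-\epsilon^{2})}$.

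The main obstacle I expect is conceptual rather than computational: making sure the invariance hypothesis for $W_{ov}^{l,m}$ propagates cleanly from $C^{\epsilon}$ to its convex hull so that the induction actually chains across layers and across generated tokens, and verifying that $\mathrm{conv}(C^{\epsilon})$ is genuinely a cone so that the normalization in the simplified decoder does not leak out. A secondary subtlety is interpreting the hypothesis ``$C^{\epsilon}\subseteq P^{d}[c,\theta]$'', since $P^{d}[c,\theta]$ is defined inside a $d$-dimensional subspace; the cleanest reading, and the one used above, is that the base cone $C\subseteq \mathrm{span}(C)$ sits inside $P^{d}[c,\theta]$ with $\mathrm{span}(C)$ playing the role of the ambient $d$-dimensional subspace, which is exactly what is needed for step (c) to apply to each $u_{i}$.
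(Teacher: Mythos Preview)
Your proposal is correct and follows essentially the same two-step argument as the paper: the double induction on layers and token index using that softmax attention is a nonnegative combination and that $W_{ov}^{l,m}$ preserves $C^{\epsilon}$ (hence its convex hull), followed by the same chain of estimates (triangle inequality on the orthogonal parts, the Pythagorean relation $\lVert u_i\rVert\ge\sqrt{1-\epsilon^2}\,\lVert w_i\rVert$, and the lower bound $\lVert u\rVert\ge\langle c,u\rangle\ge\cos\theta\sum_i\lambda_i\lVert u_i\rVert$). Your reading of the hypothesis as $C\subseteq P^{d}[c,\theta]$ inside $\mathrm{span}(C)$ is exactly how the paper uses it, and your Cauchy--Schwarz step in (c) is the same as the paper's projection-onto-$c$ argument.
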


For the initial tokens, $\theta$ indicates how concentrated their embeddings are, and $d$ is roughly the intrinsic dimension of these embeddings. Note that $d \leq |s_B|$ and the number of tokens in the system prompt $|s_B|$ is usually much smaller than the dimensions of hidden space $D$, which is $8192$ in the case of LLaMA2-70B-chat. Thus, the assumption that initial embeddings occupy a low-dimensional cone is reasonable. 

\cref{thm:first} shows the convex cone for token embeddings remains stable during the generating process if there is no user input, which leads to the plateau within an utterance.

\section{Mitigating Instruction Drift}
\label{sec:exp}

If instruction drift is related to attention decay, that suggests we can mitigate drift by manipulating the level of attention on the original prompt. Before presenting an attention-based mitigation method, however, we describe two baselines.

\subsection{Baseline Methods}

\paragraph*{System Prompt Repetition (SPR)} We inject the system prompt with probability $0\leq p\leq 1$ before each user utterance. The repeated system prompts, like the standard system prompt at the start of the input sequence, only appear when the language model is prompted; users do not see them. 

\paragraph*{Classifier-Free Guidance (CFG)} The second method is classifier-free guidance (CFG,~\citealp{sanchez2023stay}), which runs the base model twice, firstly with system prompt to get $\log p(w|w_{\leq t}, s_B)$ and then without system prompt to get $\log p(w|w_{\leq t})$. It then uses a contrastive linear operation inside the logit space to strengthen the effects of the system prompt on answer generation. The new next-token probability distribution is defined by:
\begin{equation}
\log \hat{p}(w|w_{\leq t}, s_B) = \log p(w|w_{\leq t}) + \alpha(\log p(w|w_{\leq t}, s_B) - \log p(w|w_{\leq t})).
\end{equation}
CFG comes with a hyperparameter $\alpha \geq 1$ that controls how far we shift the predicted logits. When $\alpha = 1$, it reduces to prompting with the system prompt; larger $\alpha$ produces stronger intervention. 

\subsection{Proposed Method: Split-softmax (SS)} 
\label{sec:ss}

Motivated by the attention decay phenomenon, we introduce a method that requires no retraining, \textbf{split-softmax}, aimed at reducing this decay with minimal overhead. The basic idea is straightforward: if the problem is that the model pays too little attention to the prompt, then force the model to pay more. In practice, we find that a power-law scaling of attention seems to be effective.

In particular, split-softmax (SS) works by inserting a scaling operation between~\autoref{eq:alpha} and~\autoref{eq:sha} for every attention operation. After obtaining the attention distribution $\{\alpha_{t,i}\}^t_{i=1}$ which sums up to $1$ (omitting superscript for simplicity), we reweight it by:
\begin{align}
    \pi(t) = \sum_{i=1}^{|s_B|} \alpha_{t, i}, \quad 
    \alpha'_{t,i} =     
    \begin{cases}
      \frac{\pi^k(t)}{\pi(t)}\alpha_{t, i}& \text{if }i\leq |s_B|\\
      \frac{1-\pi^k(t)}{1-\pi(t)}\alpha_{t, i}& \text{if }i>|s_B|
    \end{cases},
\label{ss_formula}
\end{align}

where the introduced exponent $0 \leq k \leq 1$ as a hyperparameter to control the strength of our intervention. The smaller $k$ is, the stronger the intervention is; when $k=1$, the intervention is nullified. The new set of attention $\{\alpha'_{t,i}\}^t_{i=1}$ sums up to $1$ as well and will replace $\{\alpha_{t,i}\}^t_{i=1}$ so that more attention is paid to the system prompt tokens. Given $0 \leq \pi(t) \leq 1$, $0 \leq k \leq 1$ thus $\frac{\pi^k(t)}{\pi(t)} \geq 1$, split-softmax increases the proportion of attention paid to system prompts. See~\autoref{app:ss} for more discussion. 

\subsection{Calibration Using Performance Drop on MMLU}
\label{sec:braindamage}

Each method (split-softmax and the two baselines) represents a potentially large intervention; any instruction stabilization may come at the expense of other capabilities of the model. However, each method has a hyperparameter that corresponds to the strength of the intervention. To compare methods, therefore, we need to measure both the increase in instruction stability and the performance drop for various values of the relevant hyperparameter. This is analogous to measuring a precision-recall curve for a classifier.

\begin{wrapfigure}{r}{0.55\textwidth}
\vspace{-4mm}
\centering
\begin{minipage}{0.55\textwidth}
\centering
\includegraphics[width=\linewidth]{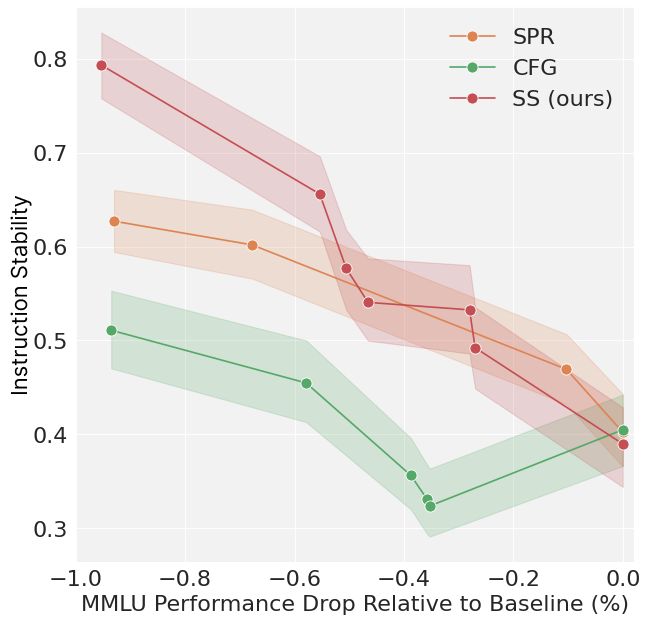}
\vspace{-2mm}
\caption{
\textbf{Comparing trade-offs between instruction stability and performance.} For each of the three methods, we vary a hyperparameter that reflects the strength of the intervention. Each curve plots the effect on stability and performance over the hyperparameter sweep. Compared to two baselines (classifier-free guidance and system prompt repetition), split-softmax produces equal or higher stability for a given level of performance degradation.
}
\label{fig:res}
\vspace{-4mm}
\end{minipage}
\end{wrapfigure}

To measure any performance changes, we use the Massive Multitask Language Understanding (MMLU,~\citealp{hendrycks2020measuring}). To compare the different methods, look at the stability improvement at equal levels of performance drop. Sweeping hyperparameters for each method allows us to measure and plot each method's stability-performance curve, revealing different trade-offs between our stability metric and MMLU performance.

As expected, we do see an inverse relationship between performance and instruction stability in all three of our methods~\autoref{fig:res}. This corroborates earlier findings by~\citet{gu2024model} that control methods over language model often come at the cost of general capability. The performance drop on MMLU should be thought of as a budget when correcting model behaviors, and two methods should only be compared on stability when their respective hyperparameters cause similar MMLU performance drop. 

To quantify stability, we use a $16$-turn conversation as described in~\autoref{fig:setup}.  We modify these conversations by applying each method to the agent LM. Then we probe the agent LM at each round to test its instruction stability in the same fashion as~\autoref{sec:setup}. Stability is measured for individual turns, and the overall stability measure is the average of the stability at each turn of agent LM. Given the conversation history of agent LM under intervention, we sample one and ask questions from MMLU at an intermediate turn (the $4$th turn in our experiments); and the answers are used to calculate MMLU accuracy. Note that due to the added system prompt and chat history, the MMLU performance is different from what is reported by LLaMA team even without intervention~\citep{touvron2023llama}. However, only the difference between post- and pre-intervention performances is meaningful, as the primary purpose of using MMLU in our case is to calibrate the strength of the intervention.

\subsection{Experimental Results}

\begin{figure*}[t]
\centering
\includegraphics[width=\linewidth]{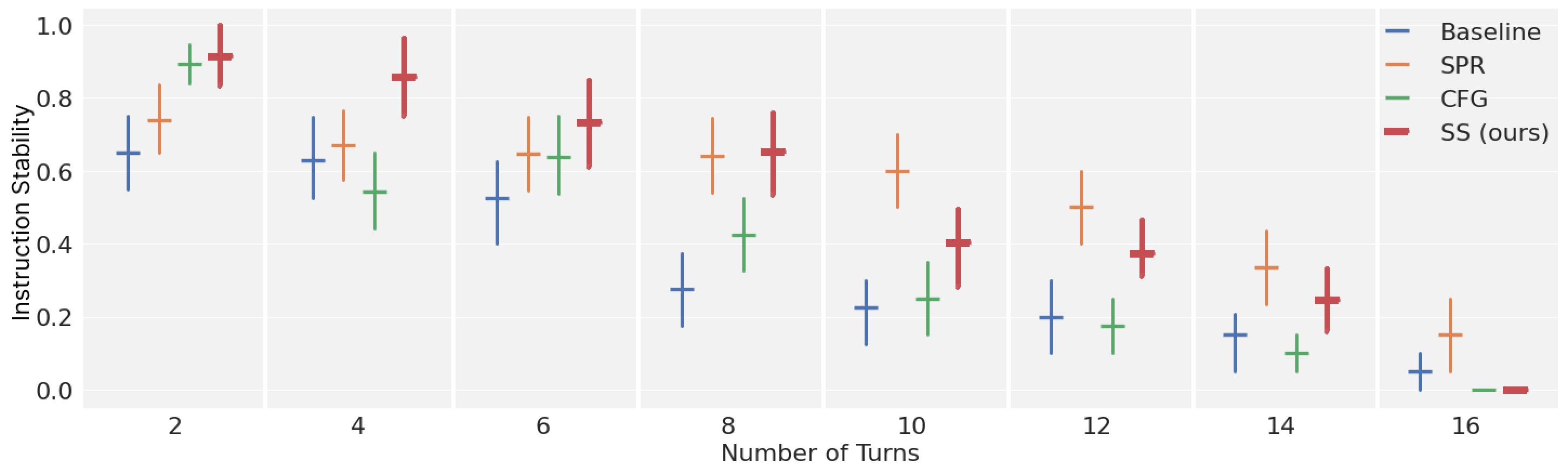}
\vspace{-0.2in}
\caption{Comparison of instruction stability across turns, with MMLU performance drop around the value of $0.5$, for system prompt repetition (SPR), classifier-free guidance (CFG), and split-softmax (SS). The whisker represents one standard deviation.}
\vspace{-3mm}
\label{fig:along_time}
\end{figure*}

All experiments are conducted on LLaMA2-70B-chat. To save computational cost, we choose one system prompt from each of the five categories, and run experiments over the total twenty ordered pairs of system prompts. 

In~\autoref{fig:res} we plot instruction stability versus performance drop on MMLU as we vary the strength hyperparameter for each method. In general, split-softmax presents a better trade-off between performance drop and instruction stability. It can match performance with system prompt repetition while avoiding using the additional context window. If more drop in performance on MMLU is allowed, split-softmax enables greater instruction stability.

In~\autoref{fig:along_time}, we break down the instruction stability measurement across turns. Similar to what~\citet{sanchez2023stay} show, classifier-free guidance helps the model adhere to the system prompt remarkably well for the first round of the conversation, but it does not generalize well into extended conversations. Both system prompt repetition and split-softmax demonstrate higher effectiveness in mitigating instruction drift, though they exhibit different trends. The former excels in regions with a larger number of turns, while the latter performs better at the beginning of the conversation. Note that system prompt repetition consumes a substantial portion of the context window.

\section{Conclusions, Discussions, and Future Work}

Our experiments indicate that instruction drift is a potentially significant issue for prompt engineering. To help address this challenge, we contribute a new protocol and benchmark to help measure this phenomenon, as well as an idealized mathematical model of its cause. In addition, we proposed a technique, split-softmax, that can help mitigate instruction drift, providing a better stability-performance trade-off than two existing baselines.

The instruction drift indicates that current LM systems still cannot behave coherently over long horizons, which is the basis of long-term planning, such as information seeking~\citep{lin2023decision} and tool usage~\citep{nakano2021webgpt}. Fundamentally, it is caused by a mismatch between their training schemes (text continuation or single-round RLHF) and their deployment scenarios (open-ended dialog with users). The phenomenon of instruction drift suggests there is much to be done to bridge this gap toward more robust and coherent dialog systems.

There is ample room for future work in this space. For example, it would be natural to explore making changes in architecture or to training to combat instruction drift. Furthermore, all the techniques we discussed involve an apparent trade-off between performance and reliability. Is this a necessary compromise, or are there methods that maintain instruction stability at no cost? It would also be good to deepen our theoretical understanding, adding realism to the idealized ``cone'' model of instruction drift that we proposed. Finding new ways to measure and prevent instruction drift is an important step in ensuring AI safety and reliability.

\clearpage
\section*{Acknowledgments}
We thank Jiawei Zhou for useful discussions and feedback on the manuscript.

KL is supported by a fellowship from the Kempner Institute for the Study of Natural and Artificial Intelligence at Harvard University and Superalignment Fast Grants from OpenAI. DB is supported by a grant from Open Philanthropy. This work has been made possible in part by a gift from the Chan Zuckerberg Initiative Foundation to establish the Kempner Institute for the Study of Natural and Artificial Intelligence. This work was partially supported by NSF grant IIS-1901030.


\bibliographystyle{colm2024_conference}
\bibliography{ref}
\clearpage
\appendix
\onecolumn
\begin{center}
    \Large \textsc{Appendix}
\end{center}
\section{Sketch of the Theory for Setting Two: User Utterances (~\cref{sec:theory})}
\label{app:theory}

Again we assume that the system tokens $h_{1},\ldots,h_{\lVert s_{B} \rVert}$ are from some $C_{0}^{\epsilon}$, and let $C_{n}$ be the smallest convex cone containing $C_{0}$ and user tokens $\{h_{|s_B|+i}\}_{i=1}^{n}$. Then the expansion $C_{0}\subset C_{1}\subset\cdots\subset C_{n}$ reflects the attention decay under the influence of user utterances. To get some intuition on the expanding process, we show the following:

\begin{proposition}\label{thm:third}
  If user tokens are drawn i.i.d. uniformly from $\mathbb{S}^{D-1}$, then with probability $1-\eta$ after $n\geq 4D+2\log \frac{1}{\eta}$ user tokens $C_{n}$ expands to the whole space $\mathbb{R}^{D}$.
\end{proposition}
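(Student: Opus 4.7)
The plan is to reduce the statement to a classical fact about random vectors on the sphere and then apply a concentration bound. Since $C_n$ contains the convex cone generated by the $n$ user tokens $\{h_{|s_B|+i}\}_{i=1}^{n}$ alone, it suffices to show that these i.i.d.\ uniform unit vectors positively span $\mathbb{R}^{D}$ with probability at least $1-\eta$. By hyperplane separation, a finite set of vectors fails to positively span $\mathbb{R}^{D}$ if and only if they all lie in a common closed half-space $\{x\in\mathbb{R}^{D}:\langle w,x\rangle\leq 0\}$ for some $w\neq 0$, so the task reduces to upper bounding the probability of this ``contained-in-a-half-space'' event.

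For that event I would invoke Wendel's theorem, which gives its exact value for any distribution that is symmetric about the origin and assigns no mass to any hyperplane through the origin. The uniform distribution on $\mathbb{S}^{D-1}$ satisfies both conditions, so the failure probability equals
\begin{equation*}
P(n,D)\;=\;2^{1-n}\sum_{k=0}^{D-1}\binom{n-1}{k}\;=\;\Pr\!\left[X\leq D-1\right],
\end{equation*}
where $X\sim\mathrm{Binomial}(n-1,\tfrac{1}{2})$.

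The final step is a lower-tail bound on $X$. Setting $t:=(n-1)/2-(D-1)$, Hoeffding's inequality yields $\Pr[X\leq D-1]\leq\exp(-2t^{2}/(n-1))$. Substituting the threshold $n=4D+2\log(1/\eta)$ and writing $a=D$, $b=\log(1/\eta)$, one finds $t=a+b+\tfrac{1}{2}$ and $n-1\leq 4a+2b$, so the exponent is at least $(a+b)^{2}/(2a+b)$. The elementary identity $(a+b)^{2}\geq b(2a+b)$ (which reduces to $a^{2}\geq 0$) then makes this at least $b=\log(1/\eta)$, giving $P(n,D)\leq\eta$; monotonicity of $t^{2}/(n-1)$ in $n$ for $n\geq 2D$ extends the conclusion to all larger $n$.

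The only non-routine ingredient is Wendel's theorem itself, which I would simply cite; its proof is the classical sign-flip argument (conditional on magnitudes the signs are uniform on $\{\pm1\}^{n}$, and the number of bad sign patterns is Schl\"{a}fli's count of regions of a central hyperplane arrangement). The main place to be careful is the arithmetic in the Hoeffding step, since the specific constants $4D$ and $2\log(1/\eta)$ only line up after the identity above; a cruder multiplicative Chernoff bound would already yield the right $O(D+\log(1/\eta))$ rate but with worse constants.
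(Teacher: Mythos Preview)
Your proposal is correct and follows the same overall route as the paper: reduce ``$C_n=\mathbb{R}^D$'' to ``the $n$ user tokens are not contained in any closed hemisphere,'' invoke Wendel's formula for the failure probability, and then bound the resulting binomial tail. The only genuine difference is in how the tail $2^{1-n}\sum_{k\leq D-1}\binom{n-1}{k}$ is controlled. The paper uses the crude entropy-type estimate $\sum_{i\leq D}\binom{n}{i}\leq (en/D)^{D}$ and then verifies $2^{-n}(en/D)^{D}<\eta$ by a direct but somewhat opaque calculation involving $\alpha=4+\tfrac{2}{D}\log\tfrac{1}{\eta}$. You instead rewrite the sum as $\Pr[X\leq D-1]$ for $X\sim\mathrm{Binomial}(n-1,\tfrac12)$ and apply Hoeffding, reducing the constant-matching to the one-line identity $(a+b)^{2}\geq b(2a+b)$. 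Your version is slightly cleaner and makes the origin of the constants $4D$ and $2\log(1/\eta)$ more transparent; the paper's version avoids citing Hoeffding but at the cost of a more ad~hoc algebraic check. Substantively they are the same proof.
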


\cref{thm:third} suggests that when user utterances are inserted, the size of the convex cone for token embeddings will grow significantly, which gives rise to the drop of $\pi(t)$ across utterances.
To further quantify the expansion of convex cones, we can consider the spherical measure $\sigma_{D-1}$, which is the Borel measure on the $(D-1)$-sphere such that $\sigma_{D-1}(\mathbb{S}^{D-1})=1$. For any $\epsilon$-approximate convex cone $C^{\epsilon}$, define the volume of $C^{\epsilon}$ by 
\begin{equation*}
  \mu(C^{\epsilon}):=\sigma_{D-1}(C^{\epsilon}\cap \mathbb{S}^{D-1}).
\end{equation*}
Then intuitively $\mu(C_{0}^{\epsilon})/\mu(C_{n}^{\epsilon})$ indicates the degree to which the current tokens in $C_{n}^{\epsilon}$ align with the system tokens in $C_{0}^{\epsilon}$, similar to the quantity $\pi(t)$ defined in the previous section.

In real applications, user messages are not i.i.d. uniform variables from $\mathbb{S}^{D-1}$. However, there usually exists an evident proportion of user tokens distinct from the system tokens. They could probably be tokens unique in the specific topics that the user inquires about or, more typically, tokens from a new language. It could also happen that the user is attacking the LM by sending adversarial tokens~\citep{zou2023universal}. The following proposition quantifies how attention decays in terms of $\mu(C_{0}^{\epsilon})/\mu(C_{n}^{\epsilon})$ as such embedding dimension increases. 

\begin{proposition}\label{thm:second}
  Suppose $C_{0}$ is a $d_{1}$-dimensional convex cone contained in some $d_{1}$-dimensional spherical cone $P^{d_{1}}[c_{1},\psi_{1}]$ while $C_{n}$ is a $d_{2}$-dimensional convex cone containing a $d_{2}$-dimensional spherical cone $P^{d_{2}}[c_{2},\psi_{2}]$. Then we have
  \begin{equation*}
    \frac{\mu(C_{0}^{\epsilon})}{\mu(C_{n}^{\epsilon})}\lesssim \epsilon^{d_{2}-d_{1}}.
  \end{equation*}
\end{proposition}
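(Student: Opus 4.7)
The plan is to parametrize $C^{\epsilon}\cap \mathbb{S}^{D-1}$ for a $d$-dimensional cone $C$ using a polar decomposition $w=\cos\theta\,\tilde u+\sin\theta\,\tilde v$, with $\tilde u\in \mathrm{span}(C)\cap \mathbb{S}^{d-1}$, $\tilde v\in \mathrm{span}(C)^{\bot}\cap \mathbb{S}^{D-d-1}$, and $\theta\in[0,\pi/2]$. Because $C$ is a cone, $u=\cos\theta\,\tilde u\in C$ iff $\tilde u\in C\cap \mathbb{S}^{d-1}$; and since $\lVert w\rVert=1$, the defining inequality $\lVert v\rVert\leq \epsilon\lVert w\rVert$ becomes simply $\sin\theta\leq \epsilon$. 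In these coordinates, the spherical volume element factors as $d\sigma_{D-1}\propto \cos^{d-1}\theta\,\sin^{D-d-1}\theta\,d\theta\,d\sigma_{d-1}(\tilde u)\,d\sigma_{D-d-1}(\tilde v)$, which is the standard Fubini-type decomposition of $\mathbb{S}^{D-1}$ along the orthogonal split $\mathbb{R}^{D}=\mathrm{span}(C)\oplus \mathrm{span}(C)^{\bot}$.

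Integrating out $\tilde v$ over the full sphere $\mathbb{S}^{D-d-1}$ yields a dimension-dependent constant, so up to a factor that depends only on $d$ and $D$,
\[
  \mu(C^{\epsilon})\;\asymp\;\sigma_{d-1}\bigl(C\cap \mathbb{S}^{d-1}\bigr)\cdot\int_{0}^{\arcsin \epsilon}\cos^{d-1}\theta\,\sin^{D-d-1}\theta\,d\theta.
\]
For the angular integral, using $\cos\theta\to 1$ and $\sin\theta\sim \theta$ on $[0,\arcsin\epsilon]$ gives $\int_{0}^{\arcsin\epsilon}\cos^{d-1}\theta\,\sin^{D-d-1}\theta\,d\theta \asymp \epsilon^{D-d}/(D-d)$. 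Thus, up to constants depending only on $D$ and $d$, $\mu(C^{\epsilon})\asymp \epsilon^{D-d}\,\sigma_{d-1}(C\cap \mathbb{S}^{d-1})$.

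Applying this formula once to $C_{0}$ and once to $C_{n}$, I obtain the upper bound
\[
  \mu(C_{0}^{\epsilon})\;\lesssim\;\epsilon^{D-d_{1}}\,\sigma_{d_{1}-1}\bigl(P^{d_{1}}[c_{1},\psi_{1}]\cap \mathbb{S}^{d_{1}-1}\bigr)
\]
from $C_{0}\subset P^{d_{1}}[c_{1},\psi_{1}]$, and the lower bound
\[
  \mu(C_{n}^{\epsilon})\;\gtrsim\;\epsilon^{D-d_{2}}\,\sigma_{d_{2}-1}\bigl(P^{d_{2}}[c_{2},\psi_{2}]\cap \mathbb{S}^{d_{2}-1}\bigr)
\]
from $C_{n}\supset P^{d_{2}}[c_{2},\psi_{2}]$. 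Dividing, the two spherical-cap factors and the dimensional constants collect into a single constant depending only on $d_{1},d_{2},D,\psi_{1},\psi_{2}$, while the powers of $\epsilon$ combine to give $\epsilon^{(D-d_{1})-(D-d_{2})}=\epsilon^{d_{2}-d_{1}}$, which is the claimed bound.

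The main obstacle, in my view, is not the asymptotic evaluation of the angular integral (that is a routine $\sin\theta\sim\theta$ computation) but rather the bookkeeping of what the hidden constants absorb: the $\lesssim$ suppresses factors depending on $D$, $d_{1}$, $d_{2}$, and the two aperture angles $\psi_{1},\psi_{2}$, and one must check that the spherical-cap volume in $\mathbb{S}^{d_{2}-1}$ in the denominator is bounded away from zero (which is immediate for any fixed $\psi_{2}\in(0,\pi/2)$), so that it does not degrade the $\epsilon^{d_{2}-d_{1}}$ rate. A secondary subtlety is ensuring that the polar-decomposition integral really captures all of $C^{\epsilon}\cap \mathbb{S}^{D-1}$ without double counting — this follows because the orthogonal decomposition $w=u+v$ with $u\in \mathrm{span}(C)$ and $v\in \mathrm{span}(C)^{\bot}$ is unique, so the map $(\theta,\tilde u,\tilde v)\mapsto w$ is a bijection off a measure-zero set.
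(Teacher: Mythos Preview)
Your argument is correct, and it is cleaner than the paper's. Both proofs hinge on the same asymptotic $\mu(C^{\epsilon})\asymp \epsilon^{D-d}\,\sigma_{d-1}(C\cap\mathbb{S}^{d-1})$, but the routes differ. You reach it in one stroke via the join decomposition of $\mathbb{S}^{D-1}$ along $\mathrm{span}(C)\oplus\mathrm{span}(C)^{\bot}$: the volume form factors exactly as $\cos^{d-1}\theta\,\sin^{D-d-1}\theta\,d\theta\,d\sigma_{d-1}\,d\sigma_{D-d-1}$, so $\mu(C^{\epsilon})$ is literally a product of the base measure $\sigma_{d-1}(C\cap\mathbb{S}^{d-1})$ and the angular integral over $[0,\arcsin\epsilon]$, and the same formula serves as both upper and lower bound. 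The paper instead treats the two bounds asymmetrically. For the upper bound on $\mu(C_{0}^{\epsilon})$ it writes the integral in full hyperspherical coordinates $(\phi_{1},\dots,\phi_{D-1})$, encodes the $\epsilon$-approximate constraint as $\prod_{j=1}^{d_{1}-1}\sin\phi_{j}\le\epsilon$, and then peels off the factor $(\prod\sin\phi_{j})^{D-d_{1}}\le\epsilon^{D-d_{1}}$; this is your identity unwound coordinate by coordinate, with an inequality in place of the exact factorization. For the lower bound on $\mu(C_{n}^{\epsilon})$ the paper takes an entirely different path: it packs disjoint $D$-dimensional caps $P^{D}[a_{i},\arcsin\epsilon]$ with centers $a_{i}$ in the $d_{2}$-dimensional spherical cone, bounds the number $M$ of such caps from below by a covering argument in $\mathbb{S}^{d_{2}-1}$, and invokes the cap-area formula (their Lemma~\ref{thm:hypercap}). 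Your approach bypasses this packing step entirely because the join formula is already two-sided. What the paper's route buys is that it never needs to state the join volume element as a black box---everything is reduced to one-variable $\sin^{k}x$ integrals---at the cost of a longer and less symmetric argument.
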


The geometric perspective we proposed provides a concrete explanation of why inserting user prompts will cause attention decay while autoregressive generation from the model will almost have no harm. However, one limitation here is that we have only compared the cone structure without tracking the distribution of token embeddings within the cones. In particular, if we force the majority of tokens generated from $C_{n}^{\epsilon}$ to be contained or close to $C_{0}^{\epsilon}$, the issue of attention decay could possibly be mitigated, which motivates our method in the proceeding section.

\section{Proofs for Theorems}
\label{app:norm}
We start by making simplifications to the model and token-generating process. First, the model is simplified by omitting the MLP and layer norms as in~\autoref{eq:updateh}. For the token-generating process, the embedding of the next token $h_{t+1}$ is close to $h_{t}^{L}$ among all tokens in the vocabulary in \autoref{eq:getnext}. Thus, for convenience we directly put $h_{t+1}:=h_{t}^{L}/\lVert h_{t}^{L} \rVert$ in our simplified model, meaning that all embeddings lie on the unit hypersphere $\mathbb{S}^{D-1}:=\{ v\in \mathbb{R}^{D}:\lVert v\rVert=1 \}$. 
\begin{proof}[Proof of \cref{thm:first}]
  Let $\overline{C^{\epsilon}}$ be the convex hull of $C^{\epsilon}$. The $\overline{C^{\epsilon}}$ is a convex cone containing $C^{\epsilon}$. \cref{thm:first} can be proven in two steps. 
  
  Step I. We establish that $h_{t}\in \overline{C^{\epsilon}}$ by induction. $h_{1},\ldots,h_{t_{0}}$ already satisfy the claim by assumption. Supposing that $h_{1},\ldots,h_{t}\in \overline{C^{\epsilon}}$ ($t\geq t_{0}$), we show that $h_{t+1}$ is also in $\overline{C^{\epsilon}}$. Here we look into $h_{j}^{l}$ ($j=1,\ldots,t$, $l=1,\ldots,L$) in the process of generating $h_{t+1}$. We perform induction on $l$. For $l=0$, we have $h_{j}^{l}=h_{j}\in \overline{C^{\epsilon}}$. Supposing that $h_{j}^{l}\in \overline{C^{\epsilon}}$ for $j=1,\ldots,t$, it suffices to prove that $h_{j}^{l+1}\in \overline{C^{\epsilon}}$.

  By induction hypothesis that $h_{j}^{l}\in \overline{C^{\epsilon}}$ ($j=1,\ldots,t$) we can find $k_{j}\in \mathbb{N}^{+}$, $x_{j,1},\ldots,x_{j,k_{j}}\in C^{\epsilon}$, and $w_{j,1},\ldots,w_{j,k_{j}}>0$ for $j=1,\ldots,t$ such that 
  \begin{equation*}
    h_{j}^{l}=\sum_{i=1}^{k_{j}}w_{j,i}x_{j,i}.
  \end{equation*}
  Thus, by \autoref{eq:updateh} we have
  \begin{align*}
    h_{j}^{l+1}
    &=h_{j}^{l}+\sum_{m=1}^{H}W_{o}^{l+1,m}\mathrm{Att}^{l+1,m}(h_{1}^{l},\ldots,h_{j}^{l})\\
    &=h_{j}^{l}+\sum_{m=1}^{H}\sum_{s=1}^{j}\alpha_{j,s}^{l+1,m}W_{o}^{l+1,m}W_{v}^{l+1,m}h_{s}^{l}\\
    &=h_{j}^{l}+\sum_{m=1}^{H}\sum_{s=1}^{j}\sum_{i=1}^{k_{s}}\alpha_{j,s}^{l+1,m}w_{s,i}W_{o}^{l+1,m}W_{v}^{l+1,m}x_{s,i}.
  \end{align*}
  Note that $\alpha_{j,s}^{l+1,m}>0$ since it is calculated from softmax and by assumption we have $W_{o}^{l+1,m}W_{v}^{l+1,m}x_{i,s}\in C^{\epsilon}$ as $x_{s,i}\in C^{\epsilon}$. Thus, we conclude that $h_{j}^{l+1}\in \overline{C^{\epsilon}}$. By induction we know for $l=1,\ldots,L$ and $j=1,\ldots,t$ we have $h_{j}^{l}\in \overline{C^{\epsilon}}$. Thus, $h_{t+1}=h_{t}^{L}/\lVert h_{t}^{L} \rVert\in \overline{C^{\epsilon}}$ holds. And by induction again we conclude that $h_{t}\in \overline{C^{\epsilon}}$ for all $t\geq 1$.

  Step II. Let $\gamma=\cos\theta$. We prove that $\overline{C^{\epsilon}}\subset C^{\tilde{\epsilon}}$ where $\tilde{\epsilon}=\epsilon/\sqrt{\epsilon^{2}+\gamma^{2}(1-\epsilon^{2})}$. For any $y\in \overline{C^{\epsilon}}$, there exists $k\in\mathbb{N}^{+}$, $x_{1},\ldots,x_{k}\in C^{\epsilon}$, and $w_{1},\ldots,w_{k}>0$ such that $y=\sum_{i=1}^{k}w_{i}x_{i}$. By definition of $C^{\epsilon}$, $x_{i}$ can be written as $x_{i}=u_{i}+v_{i}$ where $u_{i}\in C$ and $v_{i}\in \mathrm{span}(C)^{\bot}$ and $\lVert v_{i} \rVert\leq \epsilon \lVert x_{i} \rVert$. By definition of $P^{d}[c,\theta]$ we have $\langle c,u_{i}\rangle\geq \gamma \lVert u_{i} \rVert$ for all $i=1,\ldots,k$. Let $\tilde{u}_{i}:=\langle c,u_{i}\rangle c$. Then $\langle\tilde{u}_{i},u_{i}-\tilde{u}_{i}\rangle=0$ and hence $\langle \sum_{i=1}^{k}w_{i}\tilde{u}_{i},\sum_{i=1}^{k}w_{i}(u_{i}-\tilde{u}_{i})\rangle=0$. Therefore, we have
  \begin{equation*}
    \Bigl\lVert \sum_{i=1}^{k}w_{i}u_{i} \Bigr\rVert\geq \Bigl\lVert \sum_{i=1}^{k}w_{i}\tilde{u}_{i} \Bigr\rVert=\sum_{i=1}^{k}\Bigl\langle c,\sum_{i=1}^{k}w_{i}u_{i} \Bigr\rangle\geq \gamma \sum_{i=1}^{k}w_{i}\lVert u_{i} \rVert.
  \end{equation*}
  On the other hand, we know 
  \begin{equation*}
  \Bigl\lVert\sum_{i=1}^{k}w_{i}v_{i}\Bigr\rVert\leq \sum_{i=1}^{k}w_{i}\lVert v_{i} \rVert\leq \frac{\epsilon}{\sqrt{1-\epsilon^{2}}} \sum_{i=1}^{k}w_{i}\lVert u_{i} \rVert.
  \end{equation*}
  Therefore, it holds that
  \begin{equation*}
  \Bigl\lVert \sum_{i=1}^{k}w_{i}u_{i} \Bigr\rVert\geq \frac{\gamma\sqrt{1-\epsilon^{2}}}{\epsilon}\Bigl\lVert \sum_{i=1}^{k}w_{i}v_{i} \Bigr\rVert,
  \end{equation*}
  which implies that
  \begin{equation*}
    \Bigl\lVert \sum_{i=1}^{k}w_{i}v_{i} \Bigr\rVert\geq \frac{\epsilon}{\sqrt{\epsilon^{2}+\gamma^{2}(1-\epsilon^{2})}}\Bigl\lVert \sum_{i=1}^{k}w_{i}x_{i} \Bigr\rVert.
  \end{equation*}
  Thus, we conclude that $\overline{C^{\epsilon}}\subset C^{\tilde{\epsilon}}$.
\end{proof}

To prove \cref{thm:third} we need the following lemma.

\begin{lemma}[\citealp{wendel1962problem}]\label{thm:geometric}
  Let $N$ points be scattered uniformly at random on $\mathbb{S}^{m}\subset \mathbb{R}^{m+1}$. Then the probability that all points lie on some hemisphere is given by
  \begin{equation*}
    a_{m,N}=2^{-N+1}\sum_{k=0}^{m}\binom{N-1}{k}.
  \end{equation*}
\end{lemma}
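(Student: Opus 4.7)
The plan is to use the classical sign-randomization trick together with a combinatorial count of the chambers in a central hyperplane arrangement. Writing $X_1,\ldots,X_N$ for the $N$ i.i.d.\ uniform points on $\mathbb{S}^{m}\subset \R^{m+1}$, I would first reformulate the event: they all lie in a common closed hemisphere if and only if there exists a nonzero $v\in \R^{m+1}$ with $\langle v,X_i\rangle\geq 0$ for every $i$. Since $N$ i.i.d.\ uniform points on $\mathbb{S}^{m}$ are in general position almost surely (no $m+1$ of them are linearly dependent), I can replace this closed inequality by a strict one up to a null event.

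The heart of the argument is symmetrization. Let $\epsilon_1,\ldots,\epsilon_N\in \{\pm 1\}$ be i.i.d.\ uniform signs, independent of the $X_i$. Since the uniform measure on $\mathbb{S}^{m}$ is invariant under the antipodal map, $(\epsilon_i X_i)_{i=1}^{N}$ has the same joint law as $(X_i)_{i=1}^{N}$, and hence
\begin{equation*}
  a_{m,N} \;=\; \Pr[\exists v:\ \epsilon_i\langle v,X_i\rangle > 0 \text{ for all } i] \;=\; \E_X\!\left[\frac{N_X}{2^{N}}\right],
\end{equation*}
where $N_X$ denotes the number of sign patterns $\epsilon\in\{\pm 1\}^{N}$ for which the strict inequalities $\epsilon_i\langle v,X_i\rangle>0$ are simultaneously satisfiable in $v$.

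The key geometric observation is that $N_X$ equals the number of open chambers of the central hyperplane arrangement $\{H_i\}_{i=1}^{N}$ in $\R^{m+1}$, where $H_i:=\{v:\langle v,X_i\rangle=0\}$: each open chamber realizes exactly one sign pattern $(\mathrm{sign}\,\langle v,X_i\rangle)_{i}$, and conversely every realizable sign pattern corresponds to a chamber. I would then prove by induction on $N$ the Schl\"afli recurrence $C(N,m+1)=C(N-1,m+1)+C(N-1,m)$: when the new hyperplane $H_N$ is inserted into a generic arrangement, it is itself cut by the previous $N-1$ hyperplanes into a generic central arrangement in the $m$-dimensional subspace $H_N$, producing $C(N-1,m)$ chambers inside $H_N$, and each such chamber splits an existing full-dimensional chamber in two. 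Together with the base cases $C(1,k)=2$ and $C(N,1)=2$, this resolves to $N_X=C(N,m+1)=2\sum_{k=0}^{m}\binom{N-1}{k}$, independent of the (generic) configuration.

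Combining these steps yields $a_{m,N}=\E_X[N_X]/2^{N}=2^{-N+1}\sum_{k=0}^{m}\binom{N-1}{k}$, as claimed. I expect the main obstacle to be the combinatorial identity for $C(N,m+1)$: one must justify general position with probability one for i.i.d.\ uniform draws, argue that the chamber count depends only on the combinatorial type of the arrangement rather than on the specific normals, and verify the inductive step carefully so that no spurious coincidences appear when $H_N$ is added. Once that combinatorial fact is secured, the probabilistic ingredient reduces to the one-line antipodal-symmetry argument above, and the formula follows immediately.
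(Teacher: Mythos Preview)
Your proposal is correct and reproduces the classical proof of Wendel's theorem: the antipodal symmetrization reduces the probability to a count of sign patterns, which is exactly the chamber count of a generic central hyperplane arrangement in $\R^{m+1}$, and the Schl\"afli recurrence gives $C(N,m+1)=2\sum_{k=0}^{m}\binom{N-1}{k}$. The only caveats you flag (a.s.\ general position, independence of the chamber count from the specific generic normals) are the right ones and are standard.

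Regarding comparison with the paper: there is nothing to compare. The paper does not give a proof of this lemma at all; it simply cites it as a known result from \citet{wendel1962problem} and then invokes it in the proof of \cref{thm:third}. Your write-up is thus not an alternative to the paper's argument but a self-contained substitute for the external citation, and it is precisely Wendel's original line of reasoning.
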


\begin{proof}[Proof of \cref{thm:third}]
  If there is no hemisphere containing $h_{t_{0}+1},\ldots,h_{t_{0}+n}$, then the origin lies in $C_{n}$ and is not on the boundary, meaning that $C_{n}=\mathbb{R}^{D}$. Thus, we only need to show that for $n\geq 4D+\log \frac{1}{\eta}$, it holds that $a_{D,n}\leq \eta$.
  Since
  \begin{equation*}
    2^{-n}\sum_{i=0}^{D}\binom{n}{i}\leq 2^{-n}\sum_{i=0}^{D}\frac{n^{i}}{i!}=2^{-n}\sum_{i=0}^{D}\frac{D!}{i!}\Bigl(\frac{n}{D}\Bigr)^{i}\leq 2^{-n}\left(\frac{en}{D}\right)^{D}.
  \end{equation*}
  It suffices to prove that $2^{-n}\left(\frac{en}{D}\right)^{D}< \eta$. For convenience let $\alpha:=4+\frac{2}{D}\log \frac{1}{\eta}\leq \frac{n}{D}$. Then we can check that
  \begin{equation*}
    \bigl(\log 2-\frac{1}{2}\bigr)e^{\alpha/2}>\bigl(\frac{1}{\eta}\bigr)^{1/D}.
  \end{equation*}
  Note that
  \begin{equation*}
    e^{\alpha(\log 2-\frac{1}{2})-1}\geq \alpha\bigl(\log 2-\frac{1}{2}\bigr),
  \end{equation*}
  which is equivalent to
  \begin{equation*}
    e\alpha\leq \frac{e^{\alpha(\log 2-\frac{1}{2})}}{\log 2-\frac{1}{2}}=\frac{2^{\alpha}}{e^{\alpha/2}\bigl(\log 2-\frac{1}{2}\bigr)}.
  \end{equation*}
  Thus, we have
  \begin{equation*}
    2^{-n}\left(\frac{en}{D}\right)^{D}\leq \frac{(e\alpha)^{D}}{2^{\alpha D}}\leq \frac{1}{\bigl(\log 2-\frac{1}{2}\bigr)^{D}e^{\alpha D/2}}<\eta.
  \end{equation*}


\end{proof}

To show \cref{thm:second} we need the following lemma.

\begin{lemma}[\citealp{li2010concise}]\label{thm:hypercap}
The spherical measure of the spherical cap $P^{m+1}[c,\theta]\cap \mathbb{S}^{m}$ is given by
  \begin{equation*}
    \sigma_{m}(P^{m+1}[c,\theta]\cap \mathbb{S}^{m})=\frac{\int_{0}^{\theta}\sin^{m-1}x d x}{2\int_{0}^{\pi/2}\sin^{m-1}x d x}=\frac{\Gamma (\frac{m+1}{2})}{\sqrt{\pi}\Gamma(\frac{m}{2})}\int_{0}^{\theta}\sin^{m-1}x d x,
  \end{equation*}
  where $\Gamma(x)$ is the Gamma function.
\end{lemma}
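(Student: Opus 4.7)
The plan is to compute the spherical measure of the cap by parameterizing $\mathbb{S}^{m}$ in polar coordinates centered at $c$ and integrating the induced volume form. Because $\sigma_{m}$ is invariant under orthogonal transformations of $\mathbb{R}^{m+1}$, I would first reduce to the case $c=e_{m+1}$, so that the spherical cap $P^{m+1}[c,\theta]\cap \mathbb{S}^{m}$ becomes $\{ u\in \mathbb{S}^{m}: u_{m+1}\geq \cos\theta \}$, i.e.\ the set of points whose geodesic distance from the north pole is at most $\theta$.

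Next, I would introduce coordinates $u=(\sin\phi\cdot \omega, \cos\phi)$ with $\phi\in [0,\pi]$ and $\omega\in \mathbb{S}^{m-1}$. A standard computation of the Jacobian of this map from $[0,\pi]\times \mathbb{S}^{m-1}$ to $\mathbb{S}^{m}$ shows that the induced surface-area measure factorizes as $\sin^{m-1}\phi\, d\phi\, d\omega$, where $d\omega$ is the unnormalized surface measure on $\mathbb{S}^{m-1}$. Under this parameterization, the cap corresponds exactly to $\phi\in [0,\theta]$, so its unnormalized area is
\begin{equation*}
  A(\mathbb{S}^{m-1})\int_{0}^{\theta}\sin^{m-1}\phi\, d\phi,
\end{equation*}
while the total area of $\mathbb{S}^{m}$ is $A(\mathbb{S}^{m-1})\int_{0}^{\pi}\sin^{m-1}\phi\, d\phi=2A(\mathbb{S}^{m-1})\int_{0}^{\pi/2}\sin^{m-1}\phi\, d\phi$, using the symmetry $\sin(\pi-x)=\sin(x)$. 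Dividing these two and recalling that $\sigma_{m}$ is the probability measure on $\mathbb{S}^{m}$ (so $A(\mathbb{S}^{m-1})$ cancels) yields the first equality in the statement.

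For the second equality, I would invoke the Beta--Gamma identity
\begin{equation*}
  \int_{0}^{\pi/2}\sin^{m-1}x\, d x=\frac{1}{2}B\Bigl(\tfrac{m}{2},\tfrac{1}{2}\Bigr)=\frac{\sqrt{\pi}}{2}\cdot \frac{\Gamma(m/2)}{\Gamma((m+1)/2)},
\end{equation*}
and substitute into the denominator, which immediately gives the claimed Gamma-function expression. The main obstacle is purely bookkeeping — namely, verifying cleanly that the polar map has Jacobian $\sin^{m-1}\phi$ on the sphere; this can be done either by a direct orthonormal-frame computation or by induction on $m$ using the recursive structure $\mathbb{S}^{m}\cong [0,\pi]\times \mathbb{S}^{m-1}/\sim$. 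No deeper difficulty arises, since both integrals are one-dimensional and the Beta function identity is classical.
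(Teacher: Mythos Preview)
Your argument is correct: rotating $c$ to the north pole, using the polar factorization of the surface measure on $\mathbb{S}^{m}$ with Jacobian $\sin^{m-1}\phi$, and then invoking the Beta--Gamma identity is the standard derivation, and every step you outline goes through without issue. Note, however, that the paper does not actually prove this lemma; it is stated with a citation to \citet{li2010concise} and used as a black box in the proof of \cref{thm:second}. So there is no ``paper's own proof'' to compare against --- your proposal simply supplies the classical computation that the paper defers to the literature.
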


\begin{proof}[Proof of \cref{thm:second}]
  First we lower bound $\mu (C_{n}^{\epsilon})$ by identifying as many disjoint spherical caps with angle $\theta:=\arcsin \epsilon$ as possible and applying \cref{thm:hypercap}. 
  
  Let $M$ be the largest number such that there exists a set of points $a_{1},\ldots,a_{M}\in P^{d_{2}}[c_{2},\psi_{2}-\theta]\cap \mathbb{S}^{D-1}$ to ensure $P^{D}[a_{i},\theta]\subset P^{d_{2}}[c_{2},\psi_{2}]$ ($i=1,\ldots,M$) are disjoint from one another (``disjoint'' meaning that the measure of intersection is zero). We claim that $\bigl\{P^{d_{2}}[a_{i},2\theta]\bigr\}_{i=1}^{M}$ is a covering of $P^{d_{2}}[c_{2},\psi_{2}]$. Otherwise, choosing $a_{0}\in P^{d_{2}}[c_{2},\psi_{2}]\cap \mathbb{S}^{D-1}\setminus \bigcup_{i}P^{d_{2}}[a_{i},2\theta]$ we can check that $P^{D}[a_{0},\theta]$ does not intersect with any of $P^{D}[a_{i},\theta]$. Thus, these $M+1$ spherical caps do not overlap, which contradicts the definition of $M$. Hence $P^{d_{2}}[c_{2},\psi_{2}]\subset \bigcup_{i}P^{d_{2}}[a_{i},2\theta]$, and by \cref{thm:hypercap} we have
  \begin{align*}
    &\frac{\Gamma (\frac{d_{2}}{2})}{\sqrt{\pi}\Gamma(\frac{d_{2}-1}{2})}\int_{0}^{\psi_{2}}\sin^{d_{2}-2}x d x=\sigma_{d_{2}-1}(P^{d_{2}}[c_{2},\psi_{2}]\cap \mathbb{S}^{D-1})\\
    \leq &\sum_{i=1}^{M}\sigma_{d_{2}-1}(P^{d_{2}}[a_{i},2\theta]\cap\mathbb{S}^{D-1})=M\sigma_{d_{2}-1}(P^{d_{2}}[a_{i},2\theta])=M\frac{\Gamma (\frac{d_{2}}{2})}{\sqrt{\pi}\Gamma(\frac{d_{2}-1}{2})}\int_{0}^{2\theta}\sin^{d_{2}-2}x d x.
  \end{align*}
  On the other hand, since $P^{D}[a_{i},\theta]$'s are disjoint from each other and that $P^{D}[a_{i},\theta]\subset P^{D}[c_{2},\psi_{2}]$ (because $\epsilon=\sin\theta$), we know
  \begin{align*}
    \mu(C_{n}^{\epsilon})&\geq \sum_{i=1}^{M}\sigma_{D-1}(P^{D}[a_{i},\theta]\cap \mathbb{S}^{D-1})=M\sigma_{D-1}(P^{D}[a_{i},\theta]\cap\mathbb{S}^{D-1})\\
    &=M\frac{\Gamma (\frac{D}{2})}{\sqrt{\pi}\Gamma(\frac{D-1}{2})}\int_{0}^{\theta}\sin^{D-2}x d x\\
    &\geq \frac{\Gamma(\frac{D}{2})}{\Gamma(\frac{D-1}{2})}\frac{\int_{0}^{\psi_{2}}\sin^{d_{2}-2}x d x\int_{0}^{\theta}\sin^{D-2}x d x}{\int_{0}^{2\theta}\sin^{d_{2}-2}x d x}.
  \end{align*}

  Next we upper bound $\mu (C_{0}^{\epsilon})$. For any $(x_{1},\cdots,x_{n})\in \mathbb{B}^{n}:=\{ (x_{1},\ldots,x_{n}):\sum_{i=1}^{n}x_{i}^{2}\leq 1 \}$, we introduce the hyperspherical coordinate system, which consists of a radial coordinate $r$, and $n-1$ angular coordinates $\phi_{1},\ldots,\phi_{n-1}$, where the angles $\phi_{1},\cdots,\phi_{n-2}$ range over $[0,\pi]$ and $\phi_{n-1}$ ranges over $[0,2\pi)$. In specific, the coordinates are defined through the transformation:
  \begin{align*}
    x_{1}&=r\cos \phi_{1},\\
    x_{2}&=r\sin \phi_{1}\cos\phi_{2},\\
    x_{3}&=r\sin \phi_{1}\sin\phi_{2}\cos\phi_{3},\\
    &\vdots\\
    x_{n-1}&=r\sin\phi_{1}\cdots\sin \phi_{n-2}\cos\phi_{n-1},\\
    x_{n}&=r\sin\phi_{1}\cdots\sin\phi_{n-2}\sin\phi_{n-1}.
  \end{align*}

  By assumption we know $C_{0}\subset P^{D}[c_{1},\psi_{1}]$. Therefore, using the notion of spherical elements \citep{blumenson1960derivation}, we can write
  \begin{align*}
    \mu (C_{0}^{\epsilon})=\sigma_{D-1}(C_{0}^{\epsilon}\cap\mathbb{S}^{D-1})
    &=\frac{1}{\mathrm{Area}(\mathbb{S}^{D-1})}\int_{
      \Omega
    }\sin^{D-2}\phi_{1}\sin^{D-3}\phi_{2}\cdots\sin \phi_{D-2}d(\phi_{1},\ldots ,\phi_{D-1}),
  \end{align*}
  where 
  \begin{equation*}\textstyle
    \Omega=\left\{ (\phi_{1},\cdots,\phi_{D-1}):
      \phi_{1}\in [0,\psi_{1}],
      \phi_{2},\ldots,\phi_{D-2}\in [0,\pi],
      \phi_{D-1}\in [0,2\pi],
      \prod_{j=1}^{d_{1}-1}\sin\phi_{j}\in [0,\epsilon]
    \right\}.
  \end{equation*}
  Denoting
  \begin{equation*}\textstyle
    \Omega_{1}=\left\{ (\phi_{1},\cdots,\phi_{d_{1}-1}):
      \phi_{1}\in [0,\psi_{1}],
      \phi_{2},\ldots,\phi_{d_{1}-1}\in [0,\pi],
      \prod_{j=1}^{d_{1}-1}\sin\phi_{j}\in [0,\epsilon]
    \right\},
  \end{equation*}
  then we have
  \begin{align*}
    \mu (C_{0}^{\epsilon})
    &=\frac{1}{\mathrm{Area}(\mathbb{S}^{D-1})}\int_{(\phi_{1},\ldots,\phi_{d_{1}-1})\in \Omega_{1}}\sin^{D-2}\phi_{1}\cdots\sin^{D-d_{1}}\phi_{d_{1}-1}d(\phi_{1},\ldots,\phi_{d_{1}-1})\\
    &\qquad\int_{0}^{\pi}\cdots\int_{0}^{\pi}\int_{0}^{2\pi}\sin^{D-d_{1}-1}\phi_{d_{1}}\cdots\sin \phi_{D-2}d\phi_{d_{1}}\cdots d \phi_{D-1}\\
    &=\frac{\mathrm{Area}(\mathbb{S}^{D-d_{1}})}{\mathrm{Area}(\mathbb{S}^{D-1})}\int_{(\phi_{1},\ldots,\phi_{d_{1}-1})\in\Omega_{1}}\sin^{D-2}\phi_{1}\cdots\sin^{D-d_{1}}\phi_{d_{1}-1}d(\phi_{1},\ldots,\phi_{d_{1}-1})\\
    &\leq \frac{\mathrm{Area}(\mathbb{S}^{D-d_{1}})}{\mathrm{Area}(\mathbb{S}^{D-1})}\epsilon^{D-d_{1}}\int_{0}^{\psi_{1}}\int_{0}^{\pi}\cdots\int_{0}^{\pi}\sin^{d_{1}-2}\phi_{1}\cdots\sin \phi_{d_{1}-2}d\phi_{1}\cdots d\phi_{d_{1}-1}\\
    &=\frac{\mathrm{Area}(\mathbb{S}^{D-d_{1}})\mathrm{Area}(\mathbb{S}^{d_{1}-1})}{2\mathrm{Area}(\mathbb{S}^{D-1})}\sigma_{d_{1}-1}(P^{d_{1}}[c_{1},\psi_{1}]\cap \mathbb{S}^{D-1})\\
    &=\frac{\Gamma(\frac{D}{2})}{\Gamma(\frac{D-d_{1}+1}{2})\Gamma(\frac{d_{1}-1}{2})}\epsilon^{D-d_{1}}\int_{0}^{\psi_{1}}\sin^{d_{1}-2}x dx.
  \end{align*}
  Thus, we conclude that
  \begin{align*}
    \frac{\mu(C_{0}^{\epsilon})}{\mu(C_{n}^{\epsilon})}
    &\leq \frac{\Gamma(\frac{D-d_{1}+1}{2})\Gamma (\frac{d_{1}-1}{2})}{\Gamma (\frac{D-1}{2})}\frac{\int_{0}^{\psi_{1}}\sin^{d_{1}-2}x d x\int_{0}^{2\arcsin \epsilon}\sin^{d_{2}-2}x dx}{\int_{0}^{\psi_{2}}\sin^{d_{2}-2}x d x\int_{0}^{\arcsin \epsilon}\sin^{D-2}x dx}\epsilon^{D-d_{1}}\\
    &\lesssim \epsilon^{D-d_{1}}\frac{\epsilon^{d_{2}-1}}{\epsilon^{D-1}}=\epsilon^{d_{2}-d_{1}}.
  \end{align*}
\end{proof}

\begin{figure}[ht]
  \centering
    \includegraphics[width=0.5\linewidth]{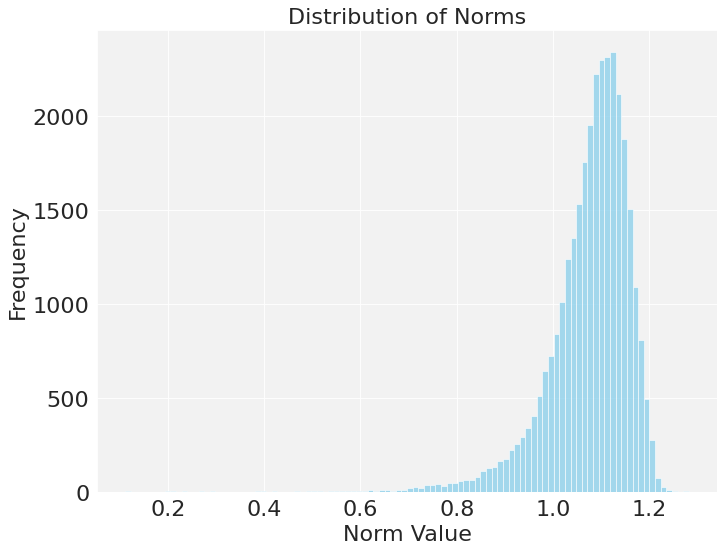}
    \caption{Histogram of embedding vector norms.}
    \label{fig:norms}
\end{figure}

\paragraph*{Norm of Embedding Vectors}

In~\cref{sec:understand}, we assume that the embedding vectors have the unit norm. To verify if this is reasonable, we plot the density of the norms of vocabulary embeddings for the LLaMA2-7B-chat in~\autoref{fig:norms}. We can observe that the norms are quite concentrated around $1$.

\section{Does RLHF help?}
\label{app:rlhf}
Given how RLHF~\cite{ouyang2022training,ziegler2019fine} train the model, the model should be trained to pay more attention to the system prompt so to increase user satisfaction. In~\autoref{fig:decay_att_old}, we show that RLHF could increase the portion of attention paid to the system prompts by comparing LLaMA2-7B and LLaMA2-7B-chat. The latter is trained on top of the former with human feedback. It shows that RLHF indeed helps in combating instruction drift, but it still cannot eradicate it entirely due to its nature of fine-tuning.

\begin{figure*}[ht]
\centering
\includegraphics[width=1\linewidth]{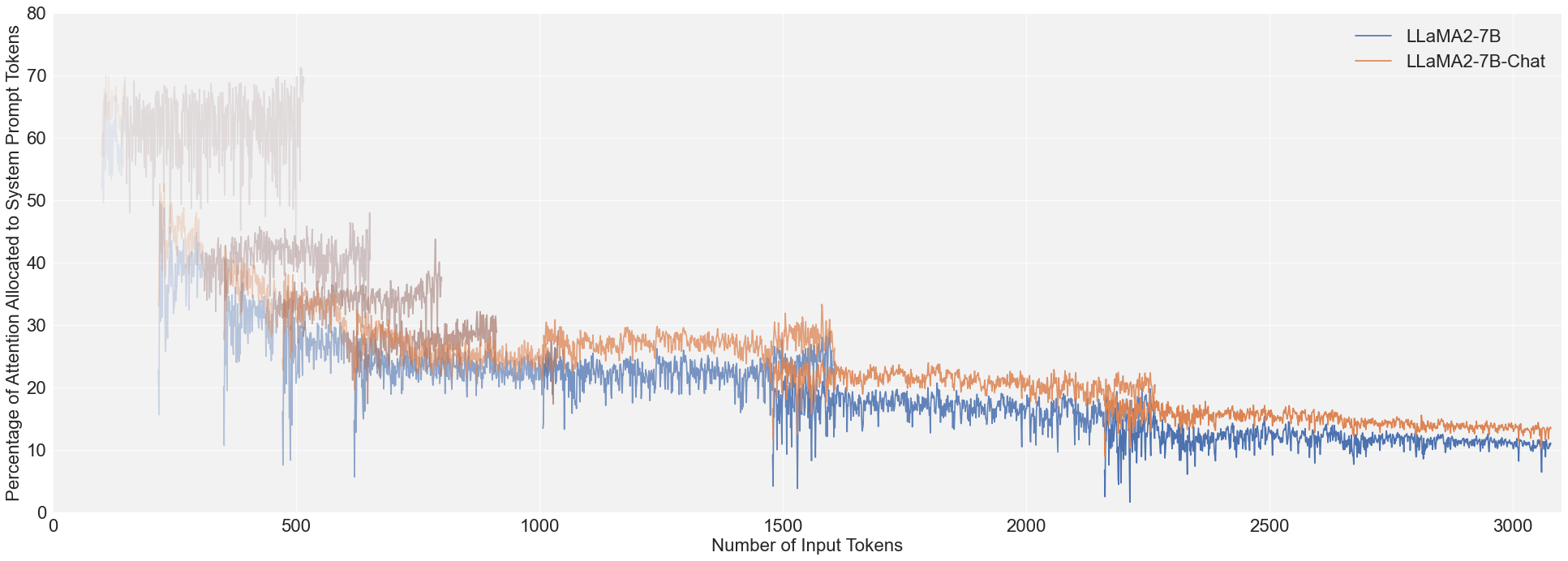}
\caption{Comparison of attention decay between LLaMA2-7B before and after RLHF training. Different from the categorical palette used in~\autoref{fig:decay_att} to differentiate number of rounds when the answer is generated. The deeper the color, the later the round in which the answer is generated.}
\label{fig:decay_att_old}
\end{figure*}

\section{Additional Instruction Drift Experiments}
\label{app:drift}

To see how close-source model compares with LLaMA2-70B-chat, we test \texttt{gpt-3.5-turbo-16k} with a total of $200$ randomly sampled system prompt pairs. Results are shown in~\autoref{fig:decay_gpt}. It turns out that \texttt{gpt-3.5-turbo-16k} holds to its system prompt better than LLaMA2-chat-70B, but still suffers a $10\%$ drop on the stability of its original system prompt. 
\begin{figure}[ht]
    \centering
    \begin{minipage}{0.45\textwidth}
        \centering
        \includegraphics[width=\textwidth]{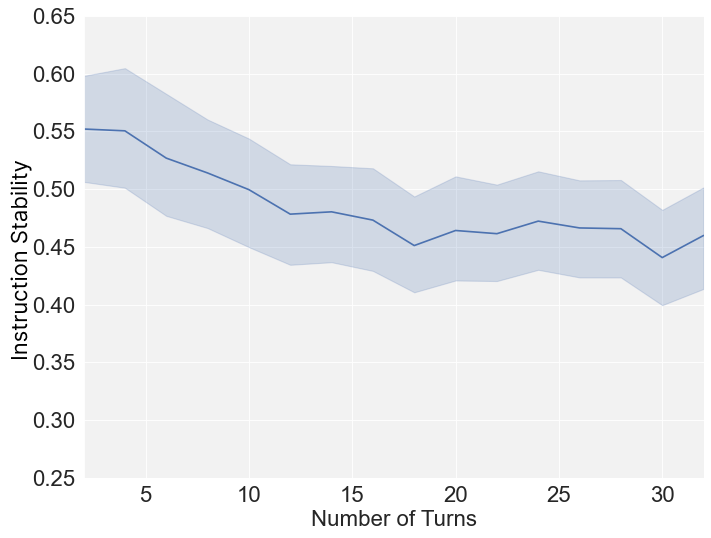}
    \end{minipage}\hfill
    \begin{minipage}{0.45\textwidth}
        \centering
        \includegraphics[width=\textwidth]{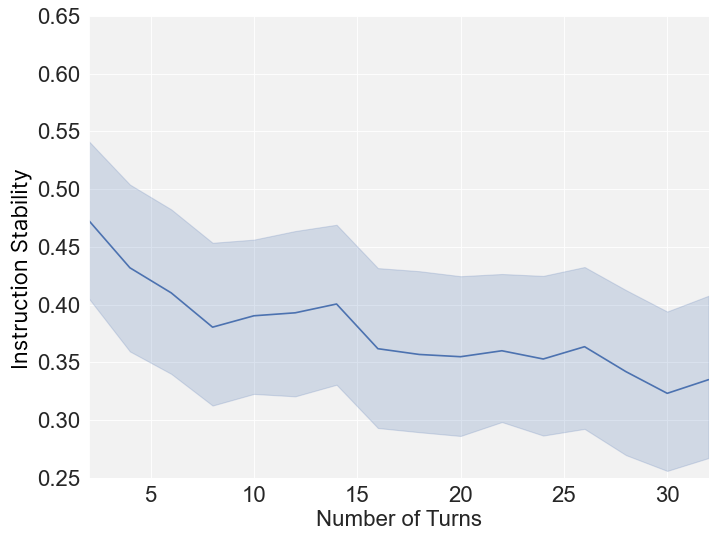}
    \end{minipage}
    \caption{Measuring the instruction stability of \texttt{gpt-3.5-turbo-16k} via API using the same protocol as~\autoref{fig:decay}. On the left, the system prompt is given to the API via the ``system'' argument; on the right, it is prepended to the user's first utterance.}
    \label{fig:decay_gpt}
\end{figure}

\section{Discussion of Split-softmax Formula}
\label{app:ss}

We first quickly show how the post-intervention attention values in~\autoref{ss_formula} still form a distribution by summing up to $1$, dropping subscript $t$:
\begin{align*}
    \sum \alpha'_{i} &=\sum_{i\leq|s_B|}\alpha'_{i} + \sum_{i>|s_B|}\alpha'_{i}\\
    &= \frac{\pi^k(t)}{\pi(t)} \sum_{i\leq|s_B|} \alpha_{i} + \frac{1-\pi^k(t)}{1-\pi(t)} \sum_{i>|s_B|} \alpha_{i} \\
    &= \pi^k(t) + \left(1-\pi^k(t) \right) \\
    & = 1
\end{align*}

Meanwhile, it is worth-noting that the ratios of attention scores for tokens within the system prompt and within conversation history remain unchanged, thereby minimizing disruption to the attention mechanism.


\clearpage
\end{document}